\documentclass[letterpaper]{article} 
\usepackage[preprint]{aaai2027}  
\usepackage[hyphens]{url}  
\usepackage{graphicx} 
\usepackage{natbib}  
\usepackage{caption} 
\usepackage{algorithm}
\usepackage{algorithmic}

\usepackage{booktabs}
\usepackage{microtype}
\usepackage{amsmath}
\usepackage{amssymb}
\usepackage{amsthm}
\usepackage{multirow}
\usepackage{array}
\usepackage{adjustbox}
\usepackage{dsfont}

\newtheorem{problem}{Problem}

\theoremstyle{plain}
\newtheorem{theorem}{Theorem} 

\newtheorem{lemma}[theorem]{Lemma}

\theoremstyle{remark}

\newcommand{\ours}{{IB-Forecast}}

\title{Information Bottleneck Learning for Faithful Time Series Forecasting Explanations}
\author{
    Xu Zheng\textsuperscript{\rm 1},
    Wei Cheng\textsuperscript{\rm 2},
    Zhuomin Chen\textsuperscript{\rm 1},
    Mo Sha\textsuperscript{\rm 1},
    Jingchao Ni\textsuperscript{\rm 3},
    Dongsheng Luo\textsuperscript{\rm 4}\corresponding
}
\affiliations{
    \textsuperscript{\rm 1}Florida International University, Miami, FL, USA\\
    \textsuperscript{\rm 2}NEC Laboratories America, Princeton, NJ, USA\\
    \textsuperscript{\rm 3}University of Houston, Houston, TX, USA\\
    \textsuperscript{\rm 4}Singapore Management University, Singapore\\
    xzhen019@fiu.edu, weicheng@nec-labs.com, zchen051@fiu.edu,\\
    msha@fiu.edu, jni7@uh.edu, luodongsheng01@gmail.com
}

\begin{document}

\maketitle

\begin{abstract}
As forecasts increasingly drive decisions in fields such as energy, transportation, and healthcare, understanding the historical data behind these predictions has become as crucial as the predictions themselves. Although existing interpretable-by-design forecasters reveal their internal structures, they offer no guarantee that these structures faithfully reflect the underlying evidence driving the predictions. In contrast, while faithfulness-oriented methods explicitly verify model behavior, they are almost exclusively designed for post-hoc classification tasks. To bridge this gap, we propose {\ours}, an inherently interpretable multivariate time-series forecasting framework. It decomposes forecasting into a learned periodic component and a residual component computed with explainable masks over input tokens. With a budget-constrained information bottleneck, end-to-end optimization enables users to directly control explanation sparsity. With a rigorous faithfulness evaluation protocol, extensive experiments demonstrate that {\ours} matches the forecasting error of leading black-box models while providing faithful explanations at no additional inference cost. Furthermore, under a matched sparsity budget, these native explanations consistently surpass gradient-based, occlusion-based, and optimization-based baselines across all evaluated datasets. Ultimately, whereas the native explanations of existing interpretable forecasters exhibit poor faithfulness, {\ours} guarantees high explanation fidelity, requiring only 14--20\% of the observations to deliver low-error predictions.
\end{abstract}

\section{Introduction}
\label{sec:intro}
Long-horizon multivariate forecasting increasingly supports decision-making in energy~\cite{informer2021}, traffic~\cite{li2018diffusion}, weather~\cite{rasp2020weatherbench}, and healthcare~\cite{johnson2016mimic}. Recent advances have produced a rapid succession of increasingly accurate forecasting architectures~\cite{informer2021,patchtst2023,itransformer2024,cyclenet2024,tqnet2025}. When forecasts drive consequential decisions, understanding which historical inputs influenced a prediction is crucial~\cite{rudin2019stop}. However, the most accurate forecasters remain notoriously opaque and lack inherent interpretability~\cite{lim2021time}. 

To unravel the black boxes, two mainstream kinds of methods are introduced. One line of work addresses this problem through intrinsically interpretable forecasting architectures, including the basis decomposition of N-BEATS and N-HiTS~\citep{nbeats2020,nhits2023}, the variable-selection networks and interpretable attention mechanisms of the Temporal Fusion Transformer~\citep{tft2021}, prototype-based models~\citep{prosenet2019,protots2025}, and concept-bottleneck approaches~\citep{cbmts2024}. These methods are typically evaluated primarily on forecasting error, while the interpretability provided by their internal structures is rarely assessed systematically. 
Another line of work develops faithfulness-oriented attribution methods that explain a trained predictor after the fact.  Recent information-bottleneck approaches~\citep{timex2023,contralsp2024,jang2025timing} isolate minimal input subsets that sustain predictive performance, frequently establishing robust evaluation protocols and benchmarks in the process. However, these approaches focuses predominantly on classification and relies on post-hoc explanations for frozen predictors.This decouples the explanation from the actual forecasting computation, frequently requiring the model to evaluate perturbed inputs that fall outside its training distribution~\cite{timexpp2024}. 

Compared to standard classification, multi-step forecasting requires modeling persistent levels and scales, recurring temporal structures, and input-dependent biases across both variables and future time horizons, making a direct transfer of classification-oriented attribution insufficient. Moreover, a forecast is often determined jointly by regular patterns learned across the training data, such as level, scale, and periodicity, and by instance-specific observations in the current history window~\cite{timesfm,nbeats2020}. 

To bridge this gap, we introduce {\ours}, a self-interpretable multivariate forecasting framework in which the forecast depends on the history window through a disclosed structural context, namely each channel's level and scale together with a learned seasonal profile, and the small set of history tokens selected by a sparse binary mask. Conditioned on this context, the forecast depends only on the emitted mask and the tokens it selects.
The explanation is generated within the forward pass and forms part of the forecasting computation itself. 
To achieve this property, {\ours} decomposes each prediction into two readable components by following PatchDecomp~\cite{patchdecomp2026} to map the input into tokens. First, a learned periodic profile captures recurring temporal patterns, such as daily or weekly variation. Second, a gated deviation readout describes which departures from those recurring patterns are inferred from the current history window. 
Beyond the disclosed context, all input dependence passes through a budgeted binary mask over patch and channel deviation tokens. Closing a gate prevents its deviation from influencing the readout when the structural context is held fixed.

From the information theoretic perspective, an explanation should preserve the information needed to predict the future while discarding input details that are unnecessary for forecasting~\cite{infomin}.  With this principle, Information Bottleneck(IB) ~\citep{tishby2000}, we learn the mask by balancing two objectives, retaining sufficient information about the forecasting target and limiting the amount of historical information that can pass through the gated deviation pathway. This balance improves the stability and generalization of both the forecast and its explanation by restricting redundant or unstable details. 
We further introduce an explicit information budget that allows users to control how much of the historical window the model may consult.  Crucially, {\ours} guarantees faithfulness by architecturally ensuring that only the tokens selected by the information bottleneck can affect the deviation readout. 
With a relaxed information budget, {\ours} approaches its dense forecasting configuration and can use the complete history window.  As the budget becomes tighter, the bottleneck forces the model to prioritize the observations that carry the most forecasting information. This produces a spectrum of models ranging from dense forecasting to highly selective forecasting within a single framework.  Across multivariate forecasting benchmarks, {\ours} retains competitive predictive accuracy under tight budgets while consulting only a small fraction of the historical input. In summary, we make three main contributions:
\begin{itemize} 
    \item We formulate self-interpretable multivariate forecasting in Problem~\ref{prob:sif}, which asks a single forward pass to jointly produce a forecast and a compact explanation under a predefined budget and a disclosed structural context.
    \item We propose {\ours}, which decomposes each forecast into a learned periodic profile and a gated deviation readout. The gate serves as an information bottleneck, retaining forecasting-relevant information while limiting unnecessary dependence on the historical input.
    \item Across multivariate benchmarks, {\ours} matches strong dense forecasters in accuracy, yields more faithful masks than intrinsic and post-hoc baselines, and outperforms test-time optimization without inference-time optimization. The results also show that accurate forecasts often require only a small fraction of historical input.
\end{itemize}

\section{Related Work}
\label{sec:related}

\noindent\textbf{Interpretable-by-design forecasting.}
Recent forecasting models achieve interpretability through various structural priors. N-BEATS and NHiTS \citep{nbeats2020,nhits2023} decompose forecasts into interpretable bases, while DLinear \citep{dlinear2023} provides transparency in the classical sense through a readable linear map. Other approaches expose specific architectural components to users, such as variable-selection weights and attention in TFT \citep{tft2021}, per-patch contributions in PatchDecomp \citep{patchdecomp2026}, prototype similarities in ProSeNet and ProtoTS \citep{prosenet2019,protots2025}, and named concepts in concept-bottleneck transformers \citep{cbmts2024}. However, these methods failed to evaluate exposed structures as faithful explanations. 

\noindent\textbf{Post-hoc time-series explanation.}
Gradient-based techniques \citep{saliency2014,ig2017,jang2025timing} and perturbation methods dominate post-hoc explanation. A prominent recent line of work focuses on learning per-instance masks at test time, including Dynamask \citep{dynamask2021}, ExtremalMask \citep{extremalmask2023}, and ContraLSP \citep{contralsp2024}. Information-bottleneck explainers such as TimeX \citep{timex2023} and TimeX++ \citep{timexpp2024} bring information-theoretic rigor and ground-truth benchmarks to this space.  Due to designed for classification after training, they can disagree with the computation \citep{rudin2019}, which makes robust evaluation an active research area itself \citep{eraser2020,zheng2025ffidelity}. 

\noindent\textbf{Information bottlenecks and stochastic gates.}
The information bottleneck principle \citep{tishby2000} was ported to interpretable graph learning by GSAT \citep{gsat2022}, whose stochastic-attention argument, in which injected gate noise penalizes retaining merely correlated inputs, is the blueprint for our gate objective. We use hard binary-Concrete gates \citep{concrete2017,l0reg2018} with straight-through estimation \citep{ste2013}.

\begin{figure*}[t]
\centering
\adjustbox{trim=0cm 0.5cm 0cm 0cm}{
\includegraphics[width=\textwidth]{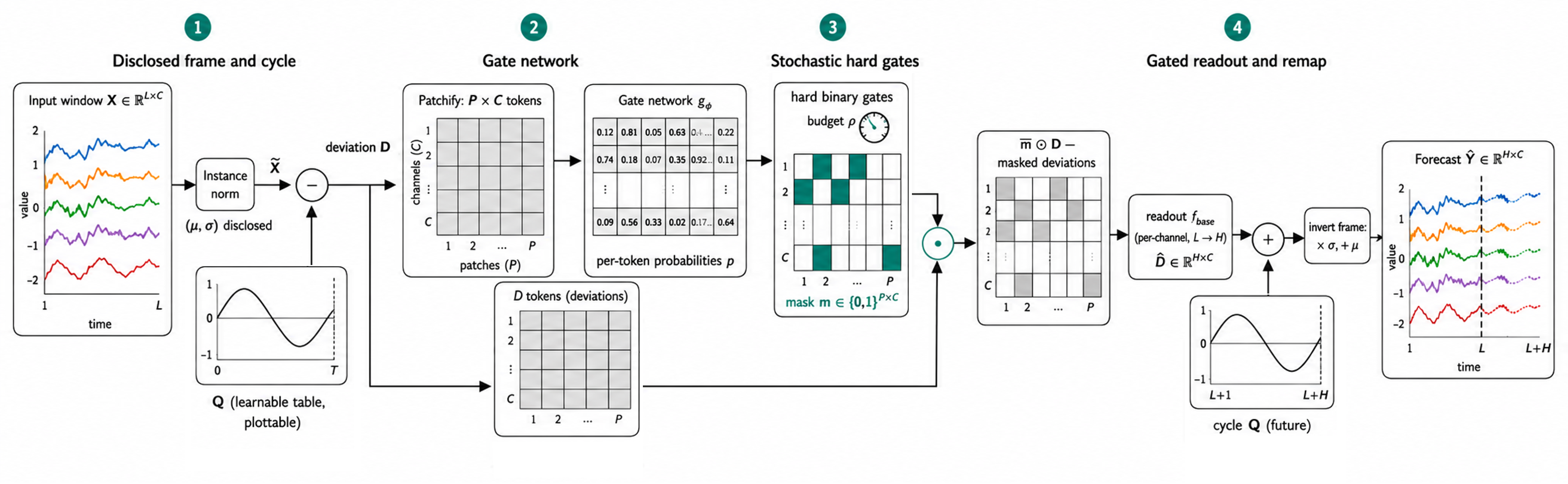}
}
\caption{Overview of {\ours}. Stage 1 normalizes the input $X$ and removes the learned periodic profile $Q$ to obtain the deviation $D$. Stage 2 partitions $D$ into tokens and scores each with a small transformer gate network.
Stage 3 samples a hard binary gate for each token to form the mask $m$. 
Stage 4 computes the forecast from the masked deviations $\bar m \odot D$ and remaps the output back to the original space.
The mask $m$, profile $Q$, and normalization information constitute the explanation $E$.} 
\label{fig:arch}
\end{figure*}
    
\section{Preliminaries}
\label{sec:setup}

\noindent \textbf{Time Series Decomposition.}
A time series is classically decomposed into three components: a slowly varying \emph{trend}, a recurring \emph{seasonal} pattern, and a \emph{residual} that remains after the first two are removed \citep{stl1990}. For each channel, the decomposition is additive:
\begin{equation}
X[t,c] = T[t,c] + S[t,c] + R[t,c],
\label{eq:classical}
\end{equation}
where $T$ is the trend, $S$ is the seasonal component with period $W$, such that
$S[t,c]=S[t \bmod W,c]$, and $R$ is the residual. By separating stable structures like seasonality from window-specific residuals and local trends, this decomposition isolates what truly drives the forecast. Instead of repeatedly explaining predictable seasonal baselines, the focus is solely on how unique local variations adapt the model to the current context.

We formalize this distinction through a structural context $\mathcal{C}=c(X)$ determined from the input. It records the recurring seasonal profile together with a per-channel frame, consisting of a level $\mu$ and scale $\sigma$. 
We assume that, \emph{once $\mathcal{C}$ is fixed, this structural frame spans both the look-back and forecast horizons, extending the seasonal profile periodically}. Consequently, \emph{given $\mathcal{C}$, $X$ and its remainder $\Delta$ mutually determine one another, as do the future $Y$ and its remainder $\Delta_Y$}.
Motivated by this separation, we define an explanation as compact, window-specific evidence interpreted within the structural context. In terms of Eq.~\eqref{eq:classical}, the explanation concerns only the residual $R$ and the local variation of the trend $T$.

\noindent \textbf{Forecasting \& Explanation.}
Let $X \in \mathbb{R}^{L \times C}$ denote a look-back window of $C$ channels over $L$ time steps, where $X[t,c]$ is the value of channel $c$ at time step $t$. We use $X$ to denote the input window, from which the context $\mathcal{C}=c(X)$ is constructed. The training set $\mathcal{T} = \left\{ (X_i,Y_i) \mid i\in[N] \right\} $ contains input windows $X_i$ and targets $Y_i\in\mathbb{R}^{H\times C}$, where each target contains the next $H$ observations of all channels. 
A conventional forecaster is a function $f:\mathcal{X}\rightarrow\mathbb{R}^{H\times C}$ learned by minimizing $\mathbb{E}\|f(X)-Y\|_2^2$.

Post-hoc attribution explains a frozen forecaster after training, but it does not guarantee that the forecast was computed through the resulting explanation. We instead require the forecast to be recoverable from the explanation itself. Inspired by the information bottleneck principle \citep{tishby2000}, we formulate the following problem.

\begin{problem}[Self-interpretable forecasting]
\label{prob:sif}
Learn a forecaster that, in a single forward pass, emits a forecast and an explanation, $(\hat{Y},E)=f(X)$, while disclosing a structural context $\mathcal{C}$, such as a seasonal pattern and its associated per-channel frame. The explanation must satisfy two properties:
\begin{enumerate}
    \item \textbf{Compactness}: the attribution $E$ selects only a small, user-controlled fraction of the input.
    \item \textbf{Conditional faithfulness by construction}: conditioned on $\mathcal{C}$, the forecast depends only on $E$; that is,
    $ \hat{Y}=g(E;\mathcal{C}) $ with the readout function $g$.
\end{enumerate}
Thus, once the structural context is fixed, any window-specific information excluded from $E$ cannot affect the forecast.
\end{problem}

The explanation $E$ is generated solely from the observed window. This prevents explanations from peeking at the forecasting horizon.
Conditioned on $\mathcal{C}$, \emph{the construction enforces the Markov chain $ X \rightarrow E \rightarrow \hat{Y}\mid C$}, 
so that $E$ is the only carrier of window-specific predictive information. In terms of Eq.~\eqref{eq:classical}, \emph{$E$ therefore selects evidence about the residual $R$ and local changes in the trend $T$}, while the seasonal component $S$ belongs to $\mathcal{C}$ and is never included in the attribution.
The conditioning applies symmetrically to the forecasting. 
\emph{Predicting $Y$ given $\mathcal{C}$ is therefore equivalent to predicting its remainder $\Delta_Y$}. Accordingly, the explanation should be evaluated on the deseasonalized and normalized future.

\section{Methodology}
\label{sec:method}

In this section, we introduce {\ours}, an interpretable framework for multivariate time-series forecasting whose explanation is generated within the same forward pass and directly determines the forecast. Figure~\ref{fig:arch} illustrates the framework, which implements the decomposition in Eq.~\eqref{eq:classical} under the formulation in Problem~\ref{prob:sif}. 
The framework consists of four stages. {The first} stage isolates the disclosed structural context $\mathcal{C}$ by normalizing each channel to remove its level and scale and then subtracting a learned seasonal profile, leaving only the window-specific deviations. 
{The second} and {third} stages construct a stochastic binary mask that selects a small, user-specified budget of deviation tokens defined over patches and channels. 
{The fourth} stage computes the deviation forecast solely from the selected tokens, after which the seasonal profile, level, and scale are restored. 
A mask applied directly to the raw window would use the budget in property (i) to retransmit the level and recurring rhythm, while setting a closed token to the implausible raw value of zero. 
In contrast, closing a deviation token expresses the in-distribution assumption that the corresponding observation follows the seasonal pattern at the current window level. 
Consequently, the selected deviations form the only window-specific pathway to the output. 
Given the disclosed context $\mathcal{C}$, the mask itself serves as the explanation, and every excluded token is structurally prevented from influencing the forecast. 
The remainder of this section defines the explanation object in Problem~\ref{prob:sif}, presents the four-stage architecture and training objective, and explains how the final explanation is obtained.

\subsubsection{Normalization and seasonal profile.}
The structural context is removed in two steps. First, each channel is normalized at the instance level, removing the window's own per-channel mean and standard deviation $\mu, \sigma \in \mathbb{R}^{1\times C}$ \citep{revin2022}. Second, the recurring seasonal pattern is subtracted \citep{cyclenet2024}. It is stored as a lookup table $Q \in \mathbb{R}^{W\times C}$ of learnable parameters covering one full cycle of length $W$, fixed from the sampling rate (e.g., $W{=}24$ on hourly data with a daily cycle). Entry $Q[w,c]$ is the typical normalized value of channel $c$ at phase $w$ of the cycle, learned jointly with the rest of the model, so a column of $Q$ plots as, for instance, the average daily load curve of that channel. With $\tau$ the window's phase within the cycle, known from its timestamp,
\begin{equation}
\tilde X = \frac{X - \mu}{\sigma}, \qquad
D = \tilde X - Q^{\mathrm{per}}_{[\tau : \tau+L]},
\label{eq:decomp}
\end{equation}
where $Q^{\mathrm{per}}[t,c] = Q[t \bmod W,\, c]$, which lets one cycle cover a look-back and horizon of any length, so when $L > W$ the profile simply tiles across the window. The subtraction leaves in the deviation $D$ only what is specific to this window, the remaining trend movements and residual structure, which the readout forecasts jointly.

\subsubsection{Amortized gate network.}
The mask operates on $(\text{patch}\times\text{channel})$ deviation tokens. We split $D$ into $P=L/L_p$ patches per channel. For patch $p$ and channel $c$, the token is
\begin{equation}
d_{p,c} =
\big(D[(p{-}1)L_p{+}1,c],\dots,D[pL_p,c]\big)
\in \mathbb{R}^{L_p},
\label{eq:token}
\end{equation}
which contains the $L_p$ consecutive deviation values in that patch. Each token is mapped to a $d$-dimensional representation using a shared linear embedding $W_e\in\mathbb{R}^{d\times L_p}$ and a learned patch-position embedding $e_p\in\mathbb{R}^{d}$,
\begin{equation}
v_{p,c}=W_e d_{p,c}+e_p .
\label{eq:embed}
\end{equation}
The gate network then scores all tokens. A transformer encoder $g_\phi$ contextualizes the token embeddings, and a linear head $W_g\in\mathbb{R}^{1\times d}$ produces a logit and opening probability for each token,
\begin{equation}
p_{p,c}=\mathrm{sig}(W_g g_\phi(v)_{p,c}),
\label{eq:logits}
\end{equation}
where $\mathrm{sig}(\cdot)$ denotes the logistic sigmoid. Since each score is computed from a contextualized representation, the gate network learns to compare tokens within the window and rank them by their forecasting relevance. This amortized scoring makes the explanation available in a single forward pass, avoiding per-instance test-time mask optimization. For datasets with many channels, we apply the encoder within each channel over its $P$ patches, rather than over all $P C$ tokens jointly. This reduces the attention cost from $O((PC)^2)$ to $O(CP^2)$ while keeping the gating procedure tractable for high-dimensional multivariate series.

\subsubsection{Stochastic hard gates.}  
To ensure that a closed token is truly dropped, we adopt binary gates. The challenge lies in optimization, since discrete gates carry no gradient. We sample each gate during training from a noisy relaxation of its logit and binarize it with a straight-through estimator \citep{l0reg2018}. Formally, we have
\begin{equation}
\begin{aligned}
m_{p,c} = \mathds{1}\big[b_{p,c} > \tfrac12\big], b_{p,c} = \mathrm{sig}\!\left(\tfrac{\ell_{p,c} + \varepsilon}{T}\right),
\end{aligned}
\label{eq:gates}
\end{equation}
where $\varepsilon = \log u - \log(1-u), u \sim \mathrm{Uniform}(0,1)$, the forward pass uses the binary $m_{p,c}$, the backward pass differentiates through the smooth $b_{p,c}$, and the temperature $T$ is annealed over training. During inference, the mask is deterministic and noise-free.

\subsubsection{Gated deviation readout.}
The readout predicts the future deviations $\hat D \in \mathbb{R}^{H\times C}$ from the masked deviations alone,
\begin{equation}
\hat D = f_{\text{base}}\big(\bar m \odot D\big),
\label{eq:readout}
\end{equation}
where $\bar m = m \otimes \mathbf{1}{L_p}$ broadcasts each gate across the $L_p$ time steps in its patch. The function $f{\text{base}}$ is a shared channel-wise feed-forward network that maps each masked deviation sequence from $L$ historical steps to $H$ future steps, jointly capturing future residual variation and local trend changes.
The final forecast re-applies the structural context, adding the future slice of the seasonal profile and inverting the frame,
\begin{equation}
\hat Y = \sigma \odot \big(\hat D + Q^{\mathrm{per}}_{[\tau+L : \tau+L+H]}\big) + \mu.
\label{eq:remap}
\end{equation}
When a token is closed, it contributes zero in deviation space. 
After the seasonal profile, scale, and level are restored, this zero deviation corresponds to the value $\mu + \sigma \odot Q$, meaning that the token follows the expected seasonal pattern at the current window level. 
The same interpretation is used in our evaluation of faithfulness. Deleting a token resets it to the seasonal profile while keeping the structural context fixed, so the architecture and evaluation metric share the same definition of an absent input. 

\begin{algorithm}[t]
\caption{\ours{} forward pass}
\label{alg:forward}
\begin{algorithmic}[1]
\REQUIRE window $X \in \mathbb{R}^{L\times C}$, phase $\tau$
\ENSURE forecast $\hat Y$, explanation $E$
\STATE $\mu, \sigma \leftarrow$ per-channel mean and std.\ of $X$
\STATE $D \leftarrow (X-\mu)/\sigma - Q^{\mathrm{per}}_{[\tau:\tau+L]}$ \hfill Eq.~\eqref{eq:decomp}
\STATE $v_{p,c} \leftarrow W_e\, d_{p,c} + e_p$ for all $(p,c)$ \hfill Eqs.~\eqref{eq:token}--\eqref{eq:embed}
\STATE $p_{p,c} \leftarrow \mathrm{sig}\big(W_g\, g_\phi(v)_{p,c}\big)$ \hfill Eq.~\eqref{eq:logits}
\IF{training}
\STATE $m \leftarrow$ hard-Concrete sample \hfill Eq.~\eqref{eq:gates}
\ELSE
\STATE $m \leftarrow \mathbb{I}\big[p > \tfrac12\big]$
\ENDIF
\STATE $\hat D \leftarrow f_{\text{base}}(\bar m \odot D)$ \hfill Eq.~\eqref{eq:readout}
\STATE $\hat Y \leftarrow \sigma \odot \big(\hat D + Q^{\mathrm{per}}_{[\tau+L:\tau+L+H]}\big) + \mu$ \hfill Eq.~\eqref{eq:remap}
\RETURN $\hat Y$ and $E = \big(m,\ \bar m \odot D\big)$, with context $C = (\tau;\,\mu,\sigma;\,Q)$
\end{algorithmic}
\end{algorithm}

\subsubsection{Training objective}
\label{sec:objective}
All parameters, comprising the profile $Q$, the gate network, and the readout, are trained end to end by minimizing the formulation:
\begin{equation}
\begin{aligned}
\mathcal{L} = \big\|\hat Y - Y\big\|_2^2
&\;+\; \beta \sum_{p,c} \mathrm{KL}\big(\mathrm{Bern}(p_{p,c})\,\|\,\mathrm{Bern}(\pi)\big)\\[2pt]
&\;+\; \lambda_b \big(\overline{p} - \rho\big)^2
\;+\; \lambda_{\text{tv}}\,\mathrm{TV}(m),
\end{aligned}
\label{eq:objective}
\end{equation}
where $\pi$ is a prior open probability, $\overline{p}$ is the mean gate probability over all tokens, $\rho \in (0,1]$ is the user-chosen budget of Problem~\ref{prob:sif}, and $\beta$, $\lambda_b$, $\lambda_{\text{tv}}$ are weighting coefficients. 
The first term is the forecasting loss. 
The second term regularizes each gate toward a sparse Bernoulli prior \citep{gsat2022}. 
The last term, $\mathrm{TV}(m) = \sum_{p,c} \lvert m_{p+1,c} - m_{p,c}\rvert$, encourages temporally contiguous selections.

\subsubsection{Explanation extraction} \label{sec:factors}

Algorithm~\ref{alg:forward} summarizes the forward pass, which emits the forecast and the explanation together. The explanation costs no additional computation compared to other methods, since every component of $E$ is the output during computing $\hat Y$, and one forward pass amounts to one gate-network encoding and one readout application. 
When a ranking over tokens is needed, the graded probabilities $p$ are emitted alongside $E$ and provide it, and point saliency for comparison with point-level explainers is $p$ broadcast over each patch's $L_p$ steps.

\begin{table*}[t]
\centering
\caption{Forecasting accuracy on average. Bold is best overall, and underline is best interpretable model. Per-horizon results in the Appendix. ProtoTS in Weather cells exceeded the compute budget (10 hours) and are omitted.}
\label{tab:accuracy}
\setlength{\tabcolsep}{2pt}
\resizebox{\textwidth}{!}{%
\begin{tabular}{l cc cc cc cc cc c cc cc cc cc}
\toprule
& \multicolumn{10}{c}{\textit{Interpretable-by-design}} & & \multicolumn{8}{c}{\textit{Black-box}} \\
\cmidrule(lr){2-11}\cmidrule(lr){13-20}
& \multicolumn{2}{c}{\textbf{\ours}} & \multicolumn{2}{c}{PatchDecomp} & \multicolumn{2}{c}{TFT} & \multicolumn{2}{c}{DLinear} & \multicolumn{2}{c}{ProtoTS$^\S$} & & \multicolumn{2}{c}{TQNet} & \multicolumn{2}{c}{CycleNet} & \multicolumn{2}{c}{iTransformer} & \multicolumn{2}{c}{PatchTST} \\
\cmidrule(lr){2-3}\cmidrule(lr){4-5}\cmidrule(lr){6-7}\cmidrule(lr){8-9}\cmidrule(lr){10-11}\cmidrule(lr){13-14}\cmidrule(lr){15-16}\cmidrule(lr){17-18}\cmidrule(lr){19-20}
Dataset & MSE & MAE & MSE & MAE & MSE & MAE & MSE & MAE & MSE & MAE & & MSE & MAE & MSE & MAE & MSE & MAE & MSE & MAE \\
\midrule
ETTh1   & \underline{\textbf{0.438}} & \underline{\textbf{0.429}} & 0.456 & 0.446 & 0.581 & 0.514 & 0.456 & 0.452 & 0.601 & 0.521 & & 0.441 & 0.434 & 0.457 & 0.441 & 0.454 & 0.448 & 0.469 & 0.455 \\
ETTh2   & \underline{0.381} & \underline{\textbf{0.402}} & 0.399 & 0.420 & 0.448 & 0.444 & 0.559 & 0.515 & 0.402 & 0.418 & & \textbf{0.378} & \textbf{0.402} & 0.388 & 0.409 & 0.383 & 0.407 & 0.387 & 0.407 \\
ETTm1   & \underline{0.383} & 0.403 & 0.384 & \underline{0.399} & 0.487 & 0.469 & 0.403 & 0.407 & 0.548 & 0.473 & & \textbf{0.377} & \textbf{0.393} & 0.379 & 0.396 & 0.407 & 0.410 & 0.387 & 0.400 \\
ETTm2   & \underline{0.272} & \underline{0.316} & 0.297 & 0.344 & 0.311 & 0.345 & 0.350 & 0.401 & 0.403 & 0.384 & & 0.277 & 0.323 & \textbf{0.266} & \textbf{0.314} & 0.288 & 0.332 & 0.281 & 0.326 \\
Weather & \underline{0.244} & \underline{0.271} & 0.253 & 0.276 & 0.275 & 0.298 & 0.265 & 0.317 & --- & --- & & \textbf{0.242} & \textbf{0.269} & 0.243 & 0.271 & 0.258 & 0.278 & 0.259 & 0.273 \\
\midrule
Overall & \underline{0.344} & \underline{\textbf{0.364}} & 0.358 & 0.377 & 0.420 & 0.414 & 0.407 & 0.418 & --- & --- & & \textbf{0.343} & \textbf{0.364} & 0.347 & 0.366 & 0.358 & 0.375 & 0.356 & 0.372 \\
\bottomrule
\end{tabular}}
\end{table*}

\section{Theoretical Analysis}
\label{sec:analysis}
Throughout the statement in previous section, the context $\mathcal{C}=(\tau;\,\mu,\sigma;\,Q)$ is computed from the window and its timestamp, Eq.~\eqref{eq:decomp} is invertible once $(\mu,\sigma,Q,\tau)$ are fixed, so $\Delta=D$, and the stochastic gates of Eq.~\eqref{eq:gates} use independent noise. The conditional information bottleneck is
\begin{equation}
\min_E \; -I(E;Y\mid\mathcal{C})+\beta I(X;E\mid\mathcal{C}),
\label{eq:cib}
\end{equation}
where $I(E;Y\mid\mathcal{C})$ is the explanation's {relevance}, and $I(X;E\mid\mathcal{C})$ is rate, the information retained from $X$.
Conditioning the bottleneck on $\mathcal{C}$ is what keeps the explanation from being calculated twice for globally predictable structure. Since $\mathcal{C}=(\tau;\mu,\sigma;Q)$ is disclosed, the saved rate is exactly $I(\mathcal{C};E)$: the explanation is not penalized for information already provided by the context. By contrast, any $\varepsilon$-sufficient unconditional explanation must spend at least $-I(\mathcal{C};Y)-\varepsilon$ bits recovering the level, scale, and cycle. 
We formalize this in Theorem~\ref{thm:rate} in the Appendix.

\begin{theorem}[Relevance controls the error floor]
\label{thm:tradeoff}
Every forecaster of the form $\hat Y=g(E;\mathcal{C})$ satisfies
$$
\mathbb{E}\|Y-\hat Y\|_2^2
\ge
\frac{n}{2\pi e}\,
\exp\!\Big(\frac{2}{n}\big(h(Y\mid\mathcal{C})-I(E;Y\mid\mathcal{C})\big)\Big),
$$
where $h(Y\mid\mathcal{C})$ is the conditional differential entropy of the future given the context. The bound decreases in the relevance $I(E;Y\mid\mathcal{C})$.
For a fixed set of opened token positions $\mathcal{M}\subseteq\{1,\dots,P\}\times\{1,\dots,C\}$, the support of the mask, write $E_{\mathcal{M}}=(\mathcal{M},\Delta_{\mathcal{M}})$ for the selective explanation that discloses the remainder values at exactly those positions. Then for nested selections $\mathcal{M}\subseteq\mathcal{M}'$,
\[
\mathbb{E}\|Y-\mathbb{E}[Y\mid E_{\mathcal{M}'},\mathcal{C}]\|_2^2
\le
\mathbb{E}\|Y-\mathbb{E}[Y\mid E_{\mathcal{M}},\mathcal{C}]\|_2^2 ,
\]
so the best achievable Bayes error is nonincreasing as the mask budget grows.
\end{theorem}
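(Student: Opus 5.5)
The first inequality follows from the maximum-entropy property of the Gaussian combined with data processing; the second from the fact that conditional expectation is an $L^2$ projection onto a nested family of $\sigma$-algebras. Throughout, write $n=HC$ for the dimension of $Y$, and assume $h(Y\mid\mathcal{C})$ is finite and $Y$ has finite second moment.

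\textbf{Step 1 (entropy of the error).} Fix any measurable readout $g$ and set $\hat Y=g(E;\mathcal{C})$. Conditional on $(E,\mathcal{C})$, the map $Y\mapsto Y-\hat Y$ is a translation, so $h(Y-\hat Y\mid E,\mathcal{C})=h(Y\mid E,\mathcal{C})$. Since conditioning reduces differential entropy, $h(Y-\hat Y)\ge h(Y-\hat Y\mid E,\mathcal{C})=h(Y\mid E,\mathcal{C})$. By the identity $I(E;Y\mid\mathcal{C})=h(Y\mid\mathcal{C})-h(Y\mid E,\mathcal{C})$, this gives
\[
h(Y-\hat Y)\ge h(Y\mid\mathcal{C})-I(E;Y\mid\mathcal{C}).
\]

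\textbf{Step 2 (Gaussian maximum entropy).} Let $Z=Y-\hat Y\in\mathbb{R}^n$ with second-moment matrix $K=\mathbb{E}[ZZ^\top]$. Then $h(Z)\le\frac12\log\big((2\pi e)^n\det K\big)$, and by AM--GM, $\det K\le(\mathrm{tr}K/n)^n=(\mathbb{E}\|Z\|_2^2/n)^n$. Combining these,
\[
h(Z)\le\frac n2\log\Big(\frac{2\pi e}{n}\,\mathbb{E}\|Z\|_2^2\Big).
\]
Chaining this with Step 1 and exponentiating yields
\[
\mathbb{E}\|Y-\hat Y\|_2^2\ge\frac{n}{2\pi e}\exp\Big(\frac2n\big(h(Y\mid\mathcal{C})-I(E;Y\mid\mathcal{C})\big)\Big).
\]
The right-hand side is visibly decreasing in the relevance $I(E;Y\mid\mathcal{C})$.

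\textbf{Step 3 (this is where the main care is needed).} Because $\hat Y$ may use randomness, namely the gate noise in Eq.~\eqref{eq:gates}, Step 1 should be carried out with $E$ treated as the full random variable emitted by the model. If the readout carries extra independent randomness $U$, I would replace $E$ by $(E,U)$ and note that $I(E,U;Y\mid\mathcal{C})=I(E;Y\mid\mathcal{C})$ when $U$ is independent of $(Y,E)$ given $\mathcal{C}$. Degenerate cases also need handling: when $h(Y\mid E,\mathcal{C})=-\infty$ the bound is trivially $0$, and otherwise the quantities are well defined under the finiteness assumptions above.

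\textbf{Step 4 (monotonicity).} For $\mathcal{M}\subseteq\mathcal{M}'$, the positions $\mathcal{M}$ are fixed, so $E_{\mathcal{M}}=(\mathcal{M},\Delta_{\mathcal{M}})$ is a measurable function of $E_{\mathcal{M}'}$ obtained by coordinate restriction. Hence
\[
\mathcal{F}:=\sigma(E_{\mathcal{M}},\mathcal{C})\subseteq\mathcal{F}':=\sigma(E_{\mathcal{M}'},\mathcal{C}).
\]
The conditional expectation $\mathbb{E}[Y\mid\mathcal{F}']$ is the $L^2$-optimal predictor among $\mathcal{F}'$-measurable functions. Since $\mathbb{E}[Y\mid\mathcal{F}]$ is in particular $\mathcal{F}'$-measurable, it is a competitor, which gives the inequality. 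Equivalently, one can invoke the Pythagorean identity
\[
\mathbb{E}\|Y-\mathbb{E}[Y\mid\mathcal{F}]\|^2=\mathbb{E}\|Y-\mathbb{E}[Y\mid\mathcal{F}']\|^2+\mathbb{E}\|\mathbb{E}[Y\mid\mathcal{F}']-\mathbb{E}[Y\mid\mathcal{F}]\|^2.
\]
Its cross term vanishes by the tower property, and the second summand is nonnegative, so the Bayes error is nonincreasing as the mask budget grows.
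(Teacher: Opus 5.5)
Your proof is correct. Your Part~2 is the paper's argument: $E_{\mathcal{M}}$ is measurable with respect to $E_{\mathcal{M}'}$, and the conditional expectation is an $L^2$ projection onto a nested space; your Pythagorean identity just makes the projection explicit.

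Part~1 takes a genuinely different route from the paper's.

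\textbf{The paper's route.} It works pointwise in $(e,c)$. It first replaces an arbitrary readout by the conditional mean, so the conditional risk is at least $\operatorname{tr}\Sigma_{e,c}$. It then applies the Gaussian max-entropy bound and AM--GM to each conditional law, obtaining $\operatorname{tr}\Sigma_{e,c}\ge\frac{n}{2\pi e}\exp(\frac{2}{n}h(Y\mid e,c))$. Finally it averages over $(E,\mathcal{C})$ using Jensen's inequality on the exponential.

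\textbf{Your route.} You apply max entropy once, unconditionally, to the actual error vector $Z=Y-\hat Y$. You pass from $h(Z)$ to $h(Y\mid E,\mathcal{C})$ by translation invariance given $(E,\mathcal{C})$ and ``conditioning reduces entropy.'' The averaging step that the paper handles with Jensen is, in your version, the nonnegativity of $I(Z;(E,\mathcal{C}))$.

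\textbf{What each buys.} Your version never invokes optimality of the conditional mean or a conditional covariance for each $(e,c)$. It needs only the second moment of the realized error. Bounding with the second-moment matrix $K$ rather than the covariance is legitimate, since $K\succeq\operatorname{Cov}(Z)$ and $\det$ is monotone on the PSD cone; you might state that in one line. The paper's version passes explicitly through the Bayes risk $\mathbb{E}\operatorname{tr}\Sigma_{E,\mathcal{C}}$. That ties Part~1 directly to the Bayes-error statement of Part~2.

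\textbf{Your Step 3.} The remark about extra readout randomness is a correct extension that the paper does not discuss. The paper does not need it, because the sampled mask is itself part of $E$, so $\hat Y$ is a deterministic function of $(E,\mathcal{C})$.
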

This theorem links explanation quality directly to forecasting accuracy. Once the context $\mathcal{C}$ is set, the forecast is only as good as the information contained in $E$. Higher relevance, $I(E;Y\mid\mathcal{C})$, drives down the minimum possible error. Naturally, allowing the model to see more tokens can only improve its best possible prediction, creating a clear trade-off between sparsity and accuracy controlled by $\beta$ or $\rho$.
Due to the page limitations, we provide the detailed proofs in the Appendix.

\section{Experiments}
\label{sec:experiments}

\subsubsection{Datasets.} We use standard multivariate benchmarks, \textbf{ETTh1}, \textbf{ETTh2}, \textbf{ETTm1}, \textbf{ETTm2}~\citep{informer2021}, \textbf{Weather}, 
and \textbf{Electricity}, spanning 7 to 21 channels; 
dataset statistics and splits are in Appendix~\ref{app:datasets}. We adopt the evaluation protocol of TQNet \citep{tqnet2025}, with look-back 96, horizons $\{96, 192, 336, 720\}$, and early stopping and model selection on the validation split only.

\subsubsection{Baselines.} The baselines cover both forecasting and explanation. For forecasting, we compare against two groups. (i). \emph{Self-interpretable forecasters}, including \textbf{TFT} \citep{tft2021}, \textbf{PatchDecomp} \citep{patchdecomp2026}, \textbf{DLinear} \citep{dlinear2023}, and \textbf{ProtoTS} \citep{protots2025}. (ii). \emph{Black-box state-of-the-art forecasters}, namely \textbf{TQNet} \citep{tqnet2025}, \textbf{CycleNet} \citep{cyclenet2024}, \textbf{iTransformer} \citep{itransformer2024}, \textbf{PatchTST} \citep{patchtst2023}, and \textbf{TimesNet} \citep{timesnet2023}.  For explanation, each self-interpretable forecaster is scored with its own native explanations, and we additionally compare against post-hoc explainers applied to our own trained model, including \textbf{gradient-based saliency}~\citep{saliency2014}, \textbf{Integrated Gradients (IG)}~\citealp{ig2017}, and \textbf{TIMING}~\citep{jang2025timing}. 
We further include perturbation-based patch-level occlusion, a uniform-random baseline, and two test-time mask optimization methods, \textbf{DynaMask}~\citep{dynamask2021} and \textbf{ExtremalMask}~\citep{extremalmask2023}. 
Detailed descriptions, implementation adaptations, and complete hyperparameter configurations for all baselines are provided in the Appendix.

\subsubsection{Metrics.} We follow previous paper~\citep{tqnet2025} and use \textbf{Mean Squared Error (MSE)} and \textbf{Mean Absolute Error (MAE)} to measure the forecasting accuracy. We measure faithfulness using matched-budget fidelity (ERASER; \citealp{eraser2020}). This relies on two complementary metrics based on forecast shifts: \textbf{comprehensiveness@$k$} (deleting the top-$k$ inputs, where higher is better) and \textbf{sufficiency@$k$} (keeping only the top-$k$ inputs, where lower is better). After normalizing both shifts against an all-masked baseline, the final fidelity score is simply comprehensiveness@$k$ minus sufficiency@$k$.
To ensure fair and consistent evaluation, we apply three corrections to all explainers: (i) we fix $k$ per sample to the explainer's native open count to avoid scoring unselected regions; (ii) we freeze instance-normalization stats $(\mu, \sigma)$ at their original values, treating them as disclosed structure; and (iii) we impute deleted values using the model's own reference semantics ($\mu + \sigma\, Q(t)$ or the window mean), ensuring all perturbations remain in-distribution.

\begin{table*}[t]
\centering
\caption{Matched-budget fidelity ($\rho{=}0.2$, $H{=}96$, 1024 test windows): Comp$\uparrow$, Suff$\downarrow$, Score $=$ Comp $-$ Suff, with budget $k$ fixed per sample to the native mask's open count. Bold: best per row. $^\dagger$Oracle-tuned (Appendix~\ref{app:baselines}). $^\ddagger$Explains its own model's forecast.}
\label{tab:fidelity}
\setlength{\tabcolsep}{2pt}
\resizebox{\textwidth}{!}{
\begin{tabular}{llccccccccccc}
\toprule
& & \multicolumn{7}{c}{\emph{Post-hoc}} & \multicolumn{4}{c}{\emph{Native}} \\
\cmidrule(lr){3-9}\cmidrule(lr){10-13}
Dataset & Metric & Occlusion & TIMING$^\dagger$ & IG & Saliency & Dynamask & ExtremalMask & Random & DLinear$^\ddagger$ & PatchDecomp$^\ddagger$ & TFT$^\ddagger$ & \textbf{\ours{}} \\
\midrule
\multirow{3}{*}{ETTh1} & Comp$\uparrow$  & 0.991 & 0.939 & 0.922 & 0.917 & 0.730 & 0.740 & 0.261 & 0.588 & 0.186 & 0.798 & \textbf{1.019} \\
& Suff$\downarrow$  & 0.171 & 0.279 & 0.356 & 0.335 & 0.420 & 0.449 & 0.943 & 0.201 & 0.398 & 2.276 & \textbf{0.069} \\
& Score$\uparrow$   & 0.820 & 0.660 & 0.567 & 0.581 & 0.310 & 0.292 & $-$0.682 & 0.387 & $-$0.211 & $-$1.478 & \textbf{0.950} \\
\midrule
\multirow{3}{*}{ETTm1} & Comp$\uparrow$  & 0.940 & 0.947 & 0.942 & 0.889 & 0.751 & 0.711 & 0.357 & 0.695 & 0.303 & 0.565 & \textbf{1.056} \\
& Suff$\downarrow$  & 0.303 & 0.393 & 0.396 & 0.429 & 0.535 & 0.563 & 0.904 & 0.174 & 0.468 & 0.912 & \textbf{0.133} \\
& Score$\uparrow$   & 0.637 & 0.555 & 0.547 & 0.460 & 0.216 & 0.148 & $-$0.547 & 0.521 & $-$0.165 & $-$0.347 & \textbf{0.923} \\
\midrule
\multirow{3}{*}{Weather} & Comp$\uparrow$  & 0.819 & 0.863 & 0.859 & 0.797 & 0.623 & 0.561 & 0.118 & 0.879 & 0.726 & 0.646 & \textbf{0.925} \\
& Suff$\downarrow$  & 0.116 & 0.092 & 0.091 & 0.131 & 0.235 & 0.301 & 0.709 & \textbf{0.042} & 0.318 & 0.226 & 0.077 \\
& Score$\uparrow$   & 0.703 & 0.771 & 0.768 & 0.665 & 0.389 & 0.260 & $-$0.591 & 0.838 & 0.408 & 0.420 & \textbf{0.848} \\
\bottomrule
\end{tabular}}
\end{table*}

\begin{figure*}[t]
\centering
\includegraphics[width=\textwidth]{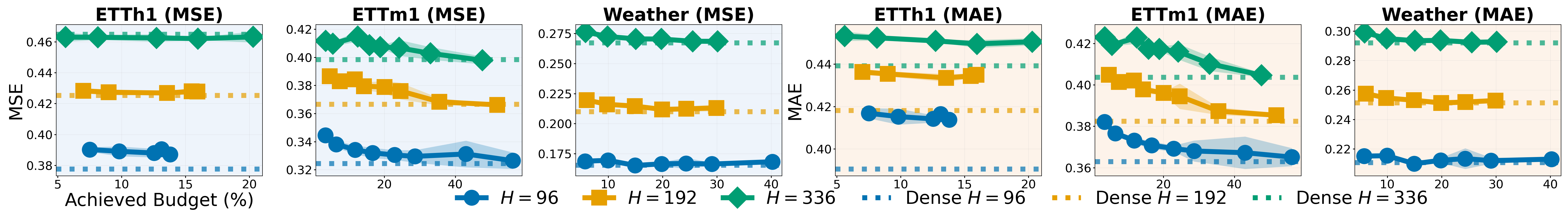}
\caption{Trade-off between explanation sparsity and forecasting error.  Test MSE (left) and MAE (right) against the measured fraction of the input read, per dataset, for horizons $H\in\{96,192,336\}$; dotted lines mark the same-horizon dense reference. Points are seed means of three seeds with a standard deviation band. }
\label{fig:pareto}
\end{figure*}

\subsubsection{Forecasting Accuracy.}
\label{sec:rq1}
As Table~\ref{tab:accuracy} lists, \ours{} is the best interpretable forecaster on every dataset in MSE and on all but ETTm1 in MAE. PatchDecomp is the closest competitor, trailing by about 4\% in overall MSE: it is essentially tied on ETTm1 but loses ground everywhere else. DLinear trails by 18\% overall; its single linear map holds up on Weather but degrades sharply on ETTh2 and ETTm2, by 47\% and 29\%, where a fixed linear read-out cannot adapt to the series. TFT trails on every dataset and by 22\% overall, so attention-based transparency buys no accuracy here. ProtoTS stays close only on ETTh2 and falls 37 to 48\% behind on the other ETT datasets. The pattern is consistent: each incumbent pays for its transparency with a capacity restriction that at least one dataset exposes, whereas \ours{} is the only model in the group whose accuracy stays on the state-of-the-art frontier across all five. Detailed per-horizon results and analysis, including PatchDecomp's short- versus long-horizon split and ProtoTS's long-horizon collapse, are given in Appendix.

Comparing against black-box state-of-the-art forecasters, in overall MSE, \ours{} is statistically tied with TQNet ($\Delta{=}0.2\%$, within seed noise) and ahead of CycleNet, PatchTST, and iTransformer. In overall MAE, it shares the best score with TQNet exactly. At the dataset level, it is the best model outright on ETTh1 and stays within 2.5\% of the best black-box model everywhere else, so no dataset exposes an interpretability tax. In the Appendix, we provide per-horizon numbers, seed variability, configuration details.

\subsubsection{Faithfulness under matched-budget fidelity.}
\label{sec:rq3}
In Table~\ref {tab:fidelity}, we report matched-budget fidelity on ETTh1, ETTm1, and Weather, at open rates of 13.9\%, 16.8\%, and 20.4\%, respectively. The proposed native mask achieves the highest Score on all three datasets. It combines high comprehensiveness with low sufficiency, showing that the selected inputs both capture the forecast shift and preserve the prediction.

Among post-hoc explainers, Occlusion is the strongest baseline, but it still trails the native mask by 0.13 to 0.29 in Score. Compared with IG and Saliency, it suggests that its estimator helps within the gradient-based family. 
Dynamask and ExtremalMask deserve a closer look. Despite 500 optimization steps per test batch and the same or an oracle budget,  their sufficiency ranges from 0.24 to 0.56 across the three datasets against 0.07 to 0.13 for the native mask, and their best Score is 0.39 against the native 0.85 to 0.95. 
This indicates that per-instance search cannot recover the sparse computation installed by end-to-end training.

The native baselines show that architectural transparency alone is not enough.
TFT and PatchDecomp obtain negative Scores on two of the three datasets, meaning that keeping their top credited inputs often fails to reproduce their own forecasts. DLinear provides a complementary case. Its attribution is mathematically exact, since it is computed from the linear weights in closed form, and its accuracy matches the published DLinear results within 2\%. But at the same budget, scores vary from 0.39 to 0.84, indicating influence is spread across the full window rather than concentrated in a sparse subset.

\subsubsection{Accuracy and explanation trade-off.}
\label{sec:rq4}
To measure this tradeoff, we sweep the budget $\rho$ from 0.05 to 0.5, retrain one model at each setting, and plot test MSE and MAE against the achieved budget in Figure~\ref{fig:pareto}.  At the budgets used in our main experiments, the mask reads only 13 to 20\% of the input, yet the accuracy loss remains small. The MSE gap from the full input baseline is at most 3.4\% on ETTh1, 3.5\% on ETTm1, and 2\% on Weather, while MAE stays within about 6\%. The gap shrinks at longer horizons and with looser budgets.

The frontier is different across datasets. On ETTm1, as the allowed read rate increases, both MSE and MAE improve smoothly, from about $+6\%$ error at a 3\% read to roughly the full input baseline near a 50\% read.  ETTh1 and Weather behave differently. Their unconstrained models already ignore much of the window, using about 20\% of the input on ETTh1 and 40 to 50\% on Weather. MSE and MAE also reveal slightly different behavior. For instance, on ETTh1, MSE suggests that sparsity is nearly free, including a 0.4 to 0.6\% gain at $H{=}336$,  while MAE still shows a 3 to 6\% penalty. 
Overall, sparse forecasting preserves accuracy with little differences while making the model's input dependence explicit.

\begin{table}[t]
\centering
\caption{Ablation study on ETTh1. \emph{Open} is the realized base-mask open rate. $^\ddagger$Fidelity is vacuous when the mask saturates fully open ($100\%$), as it then excludes nothing.}
\label{tab:ablations}
\setlength{\tabcolsep}{1pt}
\begin{tabular}{lcccccc}
\toprule
& MSE$\downarrow$ & MAE$\downarrow$ & Open & Comp$\uparrow$ & Suff$\downarrow$ & Score$\uparrow$ \\
\midrule
\textbf{\ours{}} & 0.388 & 0.411 & 0.136 & 1.030 & 0.071 & {0.959} \\
\midrule
\multicolumn{7}{@{}l}{\emph{Modules}}\\
$-$ gated base & 0.382 & 0.394 & 0.000 & 0.000 & 1.000 & $-$1.000 \\
$-$ hard gates & 0.378 & 0.407 & 0.099 & 0.870 & 0.249 & 0.621 \\
$-$ cycle $Q$ & 0.428 & 0.430 & 0.242 & 1.009 & 0.011 & 0.999 \\
\midrule
\multicolumn{7}{@{}l}{\emph{Loss terms}}\\
$-$ budget & 0.380 & 0.392 & 1.000 & 1.000 & 0.000 & 1.000$^\ddagger$ \\
$-$ gate TV & 0.387 & 0.411 & 0.135 & 1.028 & 0.071 & 0.957 \\
$-$ diversity & 0.391 & 0.412 & 0.145 & 1.058 & 0.096 & 0.961 \\
\bottomrule
\end{tabular}
\end{table}

\subsubsection{Case study.}
\label{sec:casestudy}

Figure~\ref{fig:gatemap} shows a complete explanation for one ETTh1 test window from $\rho{=}0.2$. Each row represents one channel, so all model quantities are aligned on a shared time axis.
For this example, the model opens only $16\%$ of the (patch $\times$ channel) tokens. 
These open gates are the only window-specific values the deviation readout is allowed to use, so the highlighted regions are not a post-hoc saliency estimate but the actual evidence used by the forecast (Problem~\ref{prob:sif}). 
Channels whose behavior is already explained by the periodic profile, the current level, and the current scale can remain fully closed. 
The open gates, in turn, mark where the window deviates from this disclosed structure. These masked deviations $\bar m \odot D$ are all that the readout of Eq.~\eqref{eq:readout} consumes.

\subsubsection{Ablation study}
\label{sec:rq5}
In Table~\ref{tab:ablations}, we conduct ablation studies on the architectural modules and the loss terms on ETTh1 at $H{=}96$. 
As the results show, removing the gated base or the hard gates barely moves MSE and MAE, which both stay within $2.5\%$ of the full model, yet either one destroys the explanation. 
Without gating, the mask receives no gradient from the forecast and collapses fully closed ($0\%$ open, score $-1.0$), so the reported saliency is empty. With soft gates the mask leaks a fraction of every token, so keeping its top-$k$ no longer reproduces the forecast and fidelity falls from $0.96$ to $0.62$. 
Dropping the budget term leaves accuracy untouched but saturates the mask fully open ($100\%$); its fidelity score of $1.0$ is vacuous, since a mask over the entire input excludes nothing and matched-budget fidelity is then trivially maximal. 
The gated base and the budget are complementary and together they force the mask to be a {selective} subset.  Removing the gate-TV or the factor-diversity penalty has little influence on fidelity.
The recurrent cycle $Q$ is the most important part for forecasting. Removing it will cause 10\% MSE performance drop. Beyond these on/off ablations, we also provide detailed hyperparameter analysis in the Appendix.

\begin{figure}[t]
\centering
\includegraphics[width=\columnwidth]{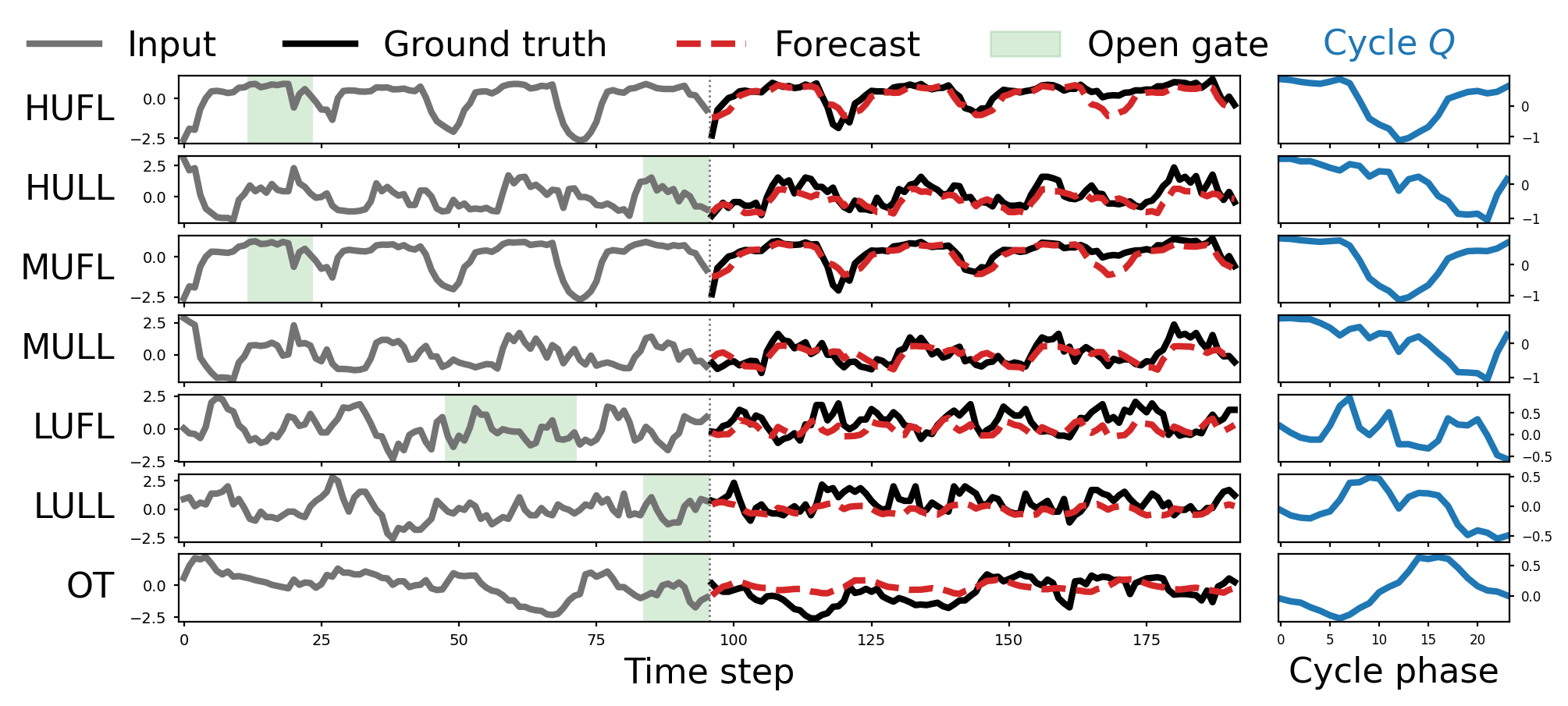}
\caption{An example of \ours{} explanation for one ETTh1 test window ($\rho{=}0.2$), one row per channel in RevIN-normalized space. Each row aligns, on a shared time axis, the lookback input (gray), the ground truth (black) and forecast (red dashed), with the mask's open gates (shaded green). The narrow right panel is the learned periodic profile $Q$.}
\label{fig:gatemap}
\end{figure}

\section{Conclusion}
\label{sec:conclusion}

We presented {\ours}, a multivariate forecaster with faithful explanations at no accuracy cost. Its budgeted stochastic mask selects deviation tokens, while disclosed level, scale, and seasonal context reconstruct the forecast; thus, the selected tokens are the only window-specific input pathway.
{\ours} matches state-of-the-art black-box accuracy, achieves stronger matched-budget fidelity than existing interpretable forecasters, and outperforms test-time optimizers on their own objective at zero inference cost. Together, certified explanations and matched-budget evaluation provide a rigorous foundation for interpretable forecasting.

\bibliography{aaai2027}
\clearpage
\newpage

\newpage
\appendix

\section{Theoretical Analysis: Full Proofs}
\subsection{Assumptions}
This appendix proves the results of Section Theoretical Analysis. We first record, as (A1)--(A3), the standing assumptions stated in prose in the Section Preliminaries and at the Section Theoretical Analysis. All results below are understood under these assumptions. Throughout, $n=HC$ denotes the dimension of the future $Y$, and all differential entropies are assumed finite.
\begin{itemize}
\item \textbf{(A1)} \emph{Determined context.} $\mathcal{C}=c(X)=(\tau;\,\mu,\sigma;\,Q)$ is a measurable function of the input window and its timestamp.
\item \textbf{(A2)} \emph{Invertible decomposition.} Once $(\tau, \mu,\sigma,Q)$ are fixed, the decomposition of Eq.~\eqref{eq:decomp} is invertible, so given $\mathcal{C}$ the window $X$ and its remainder $\Delta=D$ determine one another. The structural frame extends periodically over the forecast horizon (Eq.~\eqref{eq:remap}), so given $\mathcal{C}$ the future $Y$ and its remainder $\Delta_Y$ likewise determine one another.
\item \textbf{(A3)} \emph{Exogenous gate noise.} The explanation is generated from the remainder and the context together with independent noise, $E=e(\Delta,\mathcal{C},U)$, where $U$ collects the logistic noises of the sampler in Eq.~\eqref{eq:gates} and is independent of $(X,Y)$.
\end{itemize}

\subsection{Lemma}
\noindent To provide the analysis, we first introduce an auxiliary lemma for the proofs.
\begin{lemma}[Conditional Markov structure]
\label{lem:markov}
Suppose that assumptions \textnormal{(A1)}--\textnormal{(A3)} hold, we have the following three  properties:
\begin{itemize}
\item The chain $Y \leftrightarrow \Delta \leftrightarrow E$ holds given $\mathcal{C}$, hence $I(E;Y\mid\mathcal{C})\le I(\Delta;Y\mid\mathcal{C})$. 
\item  For fixed selective explanations $E_{\mathcal{M}}=(\mathcal{M},\Delta_{\mathcal{M}})$ with $\mathcal{M}\subseteq\mathcal{M}'$, $I(E_{\mathcal{M}};Y\mid\mathcal{C})\le I(E_{\mathcal{M}'};Y\mid\mathcal{C})$.
\item Relevance to the future equals relevance to its deseasonalized remainder, $I(E;Y\mid\mathcal{C})=I(E;\Delta_Y\mid\mathcal{C})$.
\end{itemize}
\end{lemma}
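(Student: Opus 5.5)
We prove the three items of Lemma~\ref{lem:markov} in order, each by conditioning on $\mathcal{C}=c$ and reducing to a standard information-theoretic fact: data processing, the chain rule, and invariance of mutual information under bijections.

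\emph{First item.} By (A3), $E=e(\Delta,\mathcal{C},U)$, where $U$ is independent of $(X,Y)$. By (A1), $\mathcal{C}$ and $\Delta$ are functions of $X$ and the timestamp, so $U$ is independent of $(Y,\Delta,\mathcal{C})$. Hence, conditionally on $(\Delta,\mathcal{C})$, $E$ is a function of $U$ alone, and its conditional law does not depend on $Y$. This gives $p(e\mid \Delta,Y,\mathcal{C})=p(e\mid\Delta,\mathcal{C})$, which is exactly the chain $Y\leftrightarrow\Delta\leftrightarrow E$ given $\mathcal{C}$. The conditional data-processing inequality, applied pointwise in $\mathcal{C}=c$ and then averaged, yields $I(E;Y\mid\mathcal{C})\le I(\Delta;Y\mid\mathcal{C})$.

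\emph{Second item.} Here $\mathcal{M}$ is fixed and deterministic, so $E_{\mathcal{M}}$ carries the same information as $\Delta_{\mathcal{M}}$, and likewise $E_{\mathcal{M}'}$ carries the same information as $\Delta_{\mathcal{M}'}$. Since $\mathcal{M}\subseteq\mathcal{M}'$, $\Delta_{\mathcal{M}}$ is a coordinate projection of $\Delta_{\mathcal{M}'}$. The chain rule then gives
\[
I(\Delta_{\mathcal{M}'};Y\mid\mathcal{C})=I(\Delta_{\mathcal{M}};Y\mid\mathcal{C})+I(\Delta_{\mathcal{M}'\setminus\mathcal{M}};Y\mid\Delta_{\mathcal{M}},\mathcal{C}),
\]
and the last term is nonnegative. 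Equivalently, this is data processing for the deterministic projection map.

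\emph{Third item.} By (A2), for each fixed $c$ the map $Y\mapsto\Delta_Y$, given explicitly by $\Delta_Y=(Y-\mu)/\sigma-Q^{\mathrm{per}}_{[\tau+L:\tau+L+H]}$, is a measurable bijection with measurable inverse, namely Eq.~\eqref{eq:remap}. Mutual information is invariant under such bijections of one argument, so $I(E;Y\mid\mathcal{C}=c)=I(E;\Delta_Y\mid\mathcal{C}=c)$ for each $c$. Averaging over $c$ gives the claim. One can argue the invariance either through data processing in both directions, or, for densities, by noting that the Jacobian factor $\prod\sigma$ cancels in $h(Y\mid\mathcal{C})-h(Y\mid E,\mathcal{C})$.

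The main obstacle is the first item. Care is needed to justify that $U$ remains independent of $Y$ after conditioning on $(\Delta,\mathcal{C})$. This follows from joint independence of $U$ and $(X,Y,\text{timestamp})$: $(\Delta,\mathcal{C})$ is a function of $(X,\text{timestamp})$, so $U\perp(Y,\Delta,\mathcal{C})$ holds jointly. Conditional independence then follows from the factorization of the joint law. A second point of care is measure-theoretic: we work with regular conditional distributions given $\mathcal{C}$, which exist because all variables live in Polish spaces.
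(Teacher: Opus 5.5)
Your proposal is correct and takes the same route as the paper. The three steps match: the conditional Markov chain from (A3) followed by data processing (the paper writes this as the two-order chain-rule expansion of $I((E,\Delta);Y\mid\mathcal{C})$), data processing under the coordinate projection $\Delta_{\mathcal{M}'}\mapsto\Delta_{\mathcal{M}}$, and invariance of mutual information under the per-context bijection $Y\mapsto\Delta_Y$, averaged over $\mathcal{C}$. Your explicit point that $U$ must be independent of $(Y,\Delta,\mathcal{C})$ jointly, and not merely of $Y$, is something the paper's proof glosses over, so it is worth keeping.
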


\begin{proof}
By (A3), $E$ is generated from $(\Delta,\mathcal{C})$ and exogenous noise independent of $Y$. Hence $Y\perp E\mid(\Delta,\mathcal{C})$, which is the conditional Markov chain. Applying the chain rule to $I((E,\Delta);Y\mid\mathcal{C})$ in both orders yields
\[
I(E;Y\mid\mathcal{C})+I(\Delta;Y\mid E,\mathcal{C})
=I(\Delta;Y\mid\mathcal{C})+I(E;Y\mid\Delta,\mathcal{C}).
\]
The last term is zero by conditional independence and the second term is nonnegative, so $I(E;Y\mid\mathcal{C})\le I(\Delta;Y\mid\mathcal{C})$.
If $\mathcal{M}\subseteq\mathcal{M}'$, then $E_{\mathcal{M}}$ is a deterministic projection of $E_{\mathcal{M}'}$. The same data-processing argument gives
$I(E_{\mathcal{M}};Y\mid\mathcal{C})\le I(E_{\mathcal{M}'};Y\mid\mathcal{C})$.

For the last identity, (A2) extends the determined structure over the horizon, so for each fixed context value $c$ the subtraction $Y\mapsto\Delta_Y$ is a bijection with measurable inverse. Data processing applied in both directions shows that an invertible transformation of one argument leaves mutual information unchanged, so $I(E;Y\mid\mathcal{C}=c)=I(E;\Delta_Y\mid\mathcal{C}=c)$ for every $c$; averaging over $\mathcal{C}$ gives $I(E;Y\mid\mathcal{C})=I(E;\Delta_Y\mid\mathcal{C})$.
\end{proof}

\noindent We first restate the exact rate-saving result, which motivates conditioning the information bottleneck on the disclosed context.
\begin{theorem}[Exact rate saving from disclosing the context]
\label{thm:rate}
For any explanation $E$,
$ I(X;E)=I(\mathcal{C};E)+I(\Delta;E\mid\mathcal{C}), $
so the rate saved by disclosing the context is exact:
$$ \Delta R(E)\;:=\;I(X;E)-I(\Delta;E\mid\mathcal{C})\;=\;I(\mathcal{C};E)\;\ge\;0,$$
with equality if and only if $E$ is independent of $\mathcal{C}$. Moreover, the saving is at least the predictive information routed through the context,
$ I(\mathcal{C};E)\;\ge\;I(E;Y)-I(E;Y\mid\mathcal{C})$.
\end{theorem}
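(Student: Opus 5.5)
The plan is to derive everything from the chain rule for mutual information, using (A1) and (A2) to replace $X$ by the pair $(\mathcal{C},\Delta)$. By (A1), $\mathcal{C}=c(X)$ is a measurable function of $X$. By (A2), for each fixed value of $\mathcal{C}$ the map $X\mapsto\Delta$ is a bijection with measurable inverse. Together these give a bijection $X\leftrightarrow(\mathcal{C},\Delta)$. Since an invertible transformation leaves mutual information unchanged (the same two-sided data-processing argument used in the last part of Lemma~\ref{lem:markov}), we get $I(X;E)=I(\mathcal{C},\Delta;E)$. The chain rule then yields the identity
\[
I(\mathcal{C},\Delta;E)=I(\mathcal{C};E)+I(\Delta;E\mid\mathcal{C}).
\]
Rearranging gives $\Delta R(E)=I(\mathcal{C};E)$. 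Nonnegativity is immediate, and $I(\mathcal{C};E)=0$ holds exactly when $E\perp\mathcal{C}$, which is the equality case.

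For the lower bound, I will expand $I(E;\mathcal{C},Y)$ by the chain rule in both orders:
\[
I(E;\mathcal{C})+I(E;Y\mid\mathcal{C})=I(E;Y)+I(E;\mathcal{C}\mid Y).
\]
Since $I(E;\mathcal{C}\mid Y)\ge0$, this gives $I(\mathcal{C};E)\ge I(E;Y)-I(E;Y\mid\mathcal{C})$. The bound in fact holds for any jointly distributed $(E,\mathcal{C},Y)$ with finite informations, and does not need the Markov structure of Lemma~\ref{lem:markov}.

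I expect the main obstacle to be measure-theoretic rigor in the first step. This means justifying $I(X;E)=I(\mathcal{C},\Delta;E)$ when $X$ is continuous and $\mathcal{C}$ includes learned parameters. I would first treat $Q$ as a fixed parameter, so that only $(\tau,\mu,\sigma)$ are random. I would then appeal to the general definition of mutual information as a supremum over finite partitions, which is invariant under bimeasurable bijections. This avoids any Jacobian terms that would arise if differential entropies were used directly. I would also state that all quantities are assumed finite, so that subtracting them is legitimate.
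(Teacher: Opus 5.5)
Your proposal is correct and is essentially the paper's proof. The paper also uses (A1) to write $I(X;E)=I((X,\mathcal{C});E)$, expands by the chain rule with $\mathcal{C}$ first, and then applies (A2) context by context to replace $I(X;E\mid\mathcal{C})$ with $I(\Delta;E\mid\mathcal{C})$; this carries the same content as your single bijection $X\leftrightarrow(\mathcal{C},\Delta)$, and the paper obtains the lower bound by the same two-order chain-rule expansion of $I(E;(\mathcal{C},Y))$. Your explicit assumption that the mutual informations are finite is worth keeping. The paper states finiteness only for differential entropies, yet $\Delta R(E)$ and the final bound are differences of possibly infinite quantities.
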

\noindent Conditioning on $\mathcal{C}$ prevents double-counting: subtracting $I(\mathcal{C};E)$ from the rate means the explanation is not penalized for information already present in the context, while the encoder remains free to adapt to $\mathcal{C}$. The final inequality exposes the flaw of unconditional explanations, which to obtain predictive accuracy ($I(E;Y)\ge I(X;Y)-\varepsilon$) must waste a portion of the rate ($I(\mathcal{C};Y)-\varepsilon$) reproducing the shared structure.

\subsection{Proof of Theorem ~\ref{thm:rate}}
\begin{proof} 
By (A1), $\mathcal{C}=c(X)$ is a measurable function of $X$, so the context is degenerate conditionally on $X$ and mutual information with it vanishes:
\[
I((X,\mathcal{C});E)=I(X;E)+I(\mathcal{C};E\mid X)=I(X;E).
\]
Expanding the same quantity with $\mathcal{C}$ first,
\[
I((X,\mathcal{C});E)=I(\mathcal{C};E)+I(X;E\mid\mathcal{C}).
\]
By (A2), for each fixed context value $c$ the determined structure is a fixed quantity, so the map $\varphi_c:X\mapsto\Delta$ that subtracts it is a measurable bijection with measurable inverse $\varphi_c^{-1}:\Delta\mapsto X$. 
Deterministic transformations of one argument cannot increase mutual information, applied in both directions: $\Delta=\varphi_c(X)$ gives $I(\Delta;E\mid\mathcal{C}=c)\le I(X;E\mid\mathcal{C}=c)$, and $X=\varphi_c^{-1}(\Delta)$ gives the reverse inequality, hence equality for every $c$. Averaging over $\mathcal{C}$,
\[
I(X;E\mid\mathcal{C})=I(\Delta;E\mid\mathcal{C}).
\]
Combining the previous steps proves the decomposition, and therefore
\[
\Delta R(E)=I(X;E)-I(\Delta;E\mid\mathcal{C})=I(\mathcal{C};E)\ge0,
\]
with equality if and only if $E\perp\mathcal{C}$, since mutual information vanishes exactly under independence.

Furthermore, expand $I(E;(\mathcal{C},Y))$ by the chain rule in both orders:
\begin{align*}
I(E;\mathcal{C})+I(E;Y\mid\mathcal{C})
& =I(E;(\mathcal{C},Y)) \\
& =I(E;Y)+I(E;\mathcal{C}\mid Y).
\end{align*}
Rearranging and using the nonnegativity of conditional mutual information,
\begin{align*}
I(E;\mathcal{C})
& =I(E;Y)-I(E;Y\mid\mathcal{C})+I(E;\mathcal{C}\mid Y) \\
& \ge I(E;Y)-I(E;Y\mid\mathcal{C}),
\end{align*}
with equality if and only if $E\perp\mathcal{C}\mid Y$.

Repeating Steps 1--3 with $Y$ in place of $E$ gives $I(X;Y)=I(\mathcal{C};Y)+I(\Delta;Y\mid\mathcal{C})$. If $E$ is $\varepsilon$-sufficient, $I(E;Y)\ge I(X;Y)-\varepsilon$, then combining Step 4 with the ceiling $I(E;Y\mid\mathcal{C})\le I(\Delta;Y\mid\mathcal{C})$ of Lemma~\ref{lem:markov},
\begin{align*}
I(\mathcal{C};E)
& \;\ge\; I(E;Y)-I(E;Y\mid\mathcal{C}) \\
& \;\ge\; \big(I(X;Y)-\varepsilon\big)-I(\Delta;Y\mid\mathcal{C}) \\
& \;=\; I(\mathcal{C};Y)-\varepsilon.    
\end{align*}
This is the claim in the discussion following Theorem~\ref{thm:rate}: to be $\varepsilon$-sufficient, an unconditional explanation must spend at least $I(\mathcal{C};Y)-\varepsilon$ of its rate reproducing structure already disclosed by the context.
\end{proof}

\subsection{Proof of Theorem ~\ref{thm:tradeoff}}
\begin{proof} 
\emph{Part 1 (the error floor).} For fixed $(e,c)$, the conditional mean $\bar y_{e,c}=\mathbb{E}[Y\mid E=e,\mathcal{C}=c]$ minimizes squared error, so every predictor $g(e;c)$ satisfies
\[
\mathbb{E}[\|Y-g(e;c)\|_2^2\mid e,c]\ge \operatorname{tr}\Sigma_{e,c},
\]
where $\Sigma_{e,c}$ is the conditional covariance of $Y$. The Gaussian maximizes entropy among distributions with covariance $\Sigma_{e,c}$, and the arithmetic--geometric mean inequality gives
\[
h(Y\mid e,c)
\le \frac{n}{2}\log\!\left(2\pi e\,\frac{\operatorname{tr}\Sigma_{e,c}}{n}\right).
\]
Equivalently,
\[
\operatorname{tr}\Sigma_{e,c}
\ge
\frac{n}{2\pi e}\exp\!\left(\frac{2}{n}h(Y\mid e,c)\right).
\]
Taking expectations over $(E,\mathcal{C})$ and applying Jensen's inequality to the convex exponential,
\[
\mathbb{E}\|Y-\hat Y\|_2^2
\ge
\frac{n}{2\pi e}\exp\!\left(\frac{2}{n}h(Y\mid E,\mathcal{C})\right).
\]
Finally,
$h(Y\mid E,\mathcal{C})=h(Y\mid\mathcal{C})-I(E;Y\mid\mathcal{C})$, which gives the stated bound.

\emph{Part 2 (monotonicity in the budget).} Because $E_{\mathcal{M}}$ is measurable with respect to $E_{\mathcal{M}'}$, every predictor using $(E_{\mathcal{M}},\mathcal{C})$ is also a valid predictor using $(E_{\mathcal{M}'},\mathcal{C})$. The conditional expectation
$\mathbb{E}[Y\mid E_{\mathcal{M}'},\mathcal{C}]$ is the $L^2$ projection of $Y$ onto the larger predictor space, and projection error cannot increase when the space is enlarged. This gives the displayed inequality; since any larger budget admits every selection feasible under a smaller one, the minimal Bayes risk is nonincreasing in the budget.
\end{proof}

\section{Dataset statistics}
\label{app:datasets}

Table~\ref{tab:datasets} summarizes the used benchmarks, all standard public multivariate long-term-forecasting datasets:
\begin{itemize}
\item \emph{ETTh1, ETTh2, ETTm1, ETTm2} \citep{informer2021} record two years (July 2016 to July 2018) of Electricity Transformer Temperature data from two transformers in two Chinese counties. Each has 7 channels, the target oil temperature plus six external power-load features; the \texttt{h} variants are sampled hourly and the \texttt{m} variants every 15 minutes.
\item \emph{Weather} contains 21 meteorological indicators, such as air temperature, humidity, and wind speed, recorded every 10 minutes throughout 2020 at the Max Planck Institute for Biogeochemistry weather station in Jena, Germany.
\end{itemize}

All splits and preprocessing follow the TQNet protocol verbatim: the ETT family is split 6:2:2 and the remaining datasets 7:1:2 chronologically; every channel is z-score standardized with statistics computed on the training split only, and all metrics are computed on the standardized data. Windows are formed by sliding over each split with look-back $L{=}96$ and horizons $H\in\{96,192,336,720\}$, and each window's timestamp provides the phase index $\tau$ of Eq.~\eqref{eq:decomp}. The cycle length $W$ of the seasonal profile $Q$ is fixed from the sampling rate, following the CycleNet convention \citep{cyclenet2024}: one day for the ETT family,
and Weather
($W{=}24$, $96$, $144$, and $144$ respectively).

\section{Baseline details}
\label{app:baselines}
This section describes each baseline. The information contains what the method is, what it emits as an explanation where one exists, and how it is configured or adapted for our protocol.
\paragraph{Black-box forecasters.}
The black-box group establishes the accuracy state of the art; these models offer no native explanation, so they appear in the accuracy comparison only, with their numbers taken from the TQNet consolidated table under the identical fixed-lookback protocol rather than rerun.
\begin{itemize}
\item \emph{TQNet} \citep{tqnet2025} captures multivariate correlations with periodically shifted learnable temporal queries in a single-layer attention over the raw input, followed by a lightweight MLP; it is the protocol anchor whose training pipeline our accuracy evaluation reuses.
\item \emph{CycleNet} \citep{cyclenet2024} models the recurring periodic pattern of each channel with learnable recurrent cycles and forecasts the residual with a linear or MLP backbone; our learned seasonal profile $Q$ adopts this mechanism, as credited in \S\ref{sec:method}.
\item \emph{iTransformer} \citep{itransformer2024} inverts the Transformer: each channel's whole look-back is embedded as one variate token, attention runs across variate tokens to model channel dependence, and the feed-forward network models the temporal dimension.
\item \emph{PatchTST} \citep{patchtst2023} is a channel-independent Transformer over patch tokens, with each channel processed as a separate sequence of sub-series patches under shared weights.
\item \emph{TimesNet} \citep{timesnet2023} folds the 1D series into a set of 2D tensors along its dominant periods and models intra- and inter-period variation with inception-style 2D convolutions.
\end{itemize}

\begin{table}[t]
\centering
\caption{Statistics of the multivariate benchmarks.}
\label{tab:datasets}
\setlength{\tabcolsep}{2pt}
\begin{tabular}{lcccc}
\toprule
Dataset & Channels & Time steps & Frequency & Split \\
\midrule
ETTh1, ETTh2 & 7   & 17{,}420 & 1 hour  & 6:2:2 \\
ETTm1, ETTm2 & 7   & 69{,}680 & 15 min  & 6:2:2 \\
Weather      & 21  & 52{,}696 & 10 min  & 7:1:2 \\
Electricity  & 321 & 26{,}304 & 1 hour  & 7:1:2 \\
Solar        & 137 & 52{,}560 & 10 min  & 7:1:2 \\
Traffic      & 862 & 17{,}544 & 1 hour  & 7:1:2 \\
\bottomrule
\end{tabular}
\end{table}

\paragraph{Self-interpretable forecasters.}
These models are trained on the same data as \ours{} and are the direct competitors. Each is scored on forecasting accuracy, and its native explanation, where one exists, is scored on the fidelity metrics against its own forecasts.
\begin{itemize}
\item \emph{TFT} \citep{tft2021} forecasts with variable-selection networks and interpretable attention. We implement its interpretable core faithfully: a per-timestep variable-selection network of gated residual networks that emits softmax weights over channels, an LSTM encoder, and an interpretable multi-head attention in which all heads share a single value projection so that the head-averaged attention matrix is directly readable. Components absent from our benchmarks, namely static covariates, known-future inputs, and the quantile head, are omitted; the model is trained with MSE like every other model in the harness, and instance normalization is added, without which TFT-era models underfit the distribution shift of the long-horizon benchmarks. Its native importance map is the product of the temporal attention, averaged over heads and horizon steps, and the per-timestep variable-selection weight, exactly the two quantities its paper reads off as interpretability outputs.
\item \emph{PatchDecomp} \citep{patchdecomp2026} writes the forecast as an exact sum of per-patch contributions. Our reimplementation matches the official architecture: a per-channel convolutional patch embedding with learned positional context, residual MLP encoders, a per-patch scalar bias, a single bias-free multi-head attention with future-patch queries over input-patch keys and values, and a bias-free linear decoder. Because every map after the attention is linear and bias-free, the forecast decomposes exactly into additive per-input-patch contributions, which are its native explanation; the importance of a (patch, channel) token is its mean absolute contribution over the horizon, broadcast to point resolution. Two differences from the official code are disclosed: our benchmarks carry no static or known-future covariates, so those encoders are dropped, and the real-data mode is channel-independent with shared weights, which is how its own LTSF experiments treat multivariate data.
\item \emph{DLinear} \citep{dlinear2023} decomposes the input into a moving-average trend and a remainder and forecasts each with a per-channel linear map. The model is fully transparent, and its native attribution is exact in closed form, since each future value is a weighted sum of past values with known weights.
\item \emph{ProtoTS} \citep{protots2025} explains by similarity to hierarchical prototypes, which has no input-attribution analogue, so it joins the accuracy comparison only. It is run from its official release, channel-independently under the anchor protocol, which is the one adaptation its multivariate application requires, keeping its two-stage training and prototype-splitting schedule with release hyperparameters and best-validation selection throughout. Its per-cell training cost grows quickly with channels and horizon: the Weather cells exceeded a 10-hour single-GPU budget mid-training and are excluded from Table~\ref{tab:accuracy} rather than reported from partially trained models, and the ETTm2 $H{=}720$ cell hit the same cap at epoch 56 of 60 with its learning rate at the floor and validation flat for several epochs, so it is reported from its best-validation checkpoint, the same selection the completed run would have used.
\end{itemize}
TFT and PatchDecomp receive per-dataset validation tuning identical to our own model's, and each native explanation is scored on its own model where fidelity is concerned.

\paragraph{Gradient-based post-hoc explainers.}
All three explain our trained model. Because the deployed model thresholds its gates at inference and gradients do not flow through hard gates, the gradient explainers are computed on the model's soft relaxation, with gates equal to their sigmoid probabilities, which is the standard smooth-surrogate assumption these methods make. The attribution target is the scalar forecast energy $\tfrac12\|\hat Y\|_2^2$.
\begin{itemize}
\item \emph{Saliency} \citep{saliency2014} is the absolute input gradient of the target.
\item \emph{Integrated Gradients} (IG; \citealp{ig2017}) integrates gradients along the straight path from a baseline to the input over 32 steps; the baseline is the per-channel window mean, an in-distribution reference, and the attribution is the path-averaged gradient times the input-minus-baseline displacement.
\item \emph{TIMING} \citep{jang2025timing} is temporality-aware IG: at each interpolation step, random contiguous time segments, one random channel each, are held at the true input and excluded from both the interpolation and the gradient, so every gradient is taken at a point that is on-manifold in the fixed regions, and the path sum is normalized per cell by how often that cell was free. Our port follows the reference implementation and uses the same target and baseline as the IG row, so the difference between the two rows isolates the estimator. Its release defaults, 50 segments of minimum length 10 tuned for MIMIC-III's 48-step windows, saturate a 96-step window; the TIMING row of Table~\ref{tab:fidelity} therefore reports the best of the defaults and two segment-count/length variants per dataset, which is favorable treatment, and the defaults alone score 0.404/0.450/0.771 on ETTh1/ETTm1/Weather.
\end{itemize}

\begin{table*}[t]
\centering
\caption{Adopted \ours{} configuration on all datasets. All cells share a fixed backbone: hard straight-through gates (temperature $0.5$, annealed), gate width $d{=}64$, patch length $12$, RevIN on, per-gate prior $\pi{=}0.5$, and $30$ Adam epochs.
} 
\label{tab:config}
\small
\begin{tabular}{l ccccc c}
\toprule
Dataset & Readout & Gate layers & Learning rate & $\beta$ & Cycle $W$ & Cycle used \\
\midrule
ETTh1   & gated-linear            & 1 & $1{\times}10^{-2}$          & $0.02$ & 24  & yes \\
ETTh2   & gated-linear            & 1 & $1{\times}10^{-2}$          & $0.02$ & 24  & yes\\
ETTm1   & gated-linear            & 2 & $1{\times}10^{-2}$          & $0.02$ & 96  & yes \\
ETTm2   & gated-linear            & 2 & $1{\times}10^{-2}$          & $0.02$ & 96  & yes \\
Weather & MLP                     & 1 & $3{\times}10^{-3}$          & $0.02$ & 144 & yes \\
\midrule
Electricity & MLP & 1 & $3{\times}10^{-3}$ & $0.02$ & 168 & yes \\
Solar       & MLP & 1 & $3{\times}10^{-3}$ & $0.1$  & 144 & yes \\
Traffic     & MLP & 1 & $3{\times}10^{-3}$ & $0.02$ & 168 & yes \\
\bottomrule
\end{tabular}
\end{table*}

\paragraph{Perturbation-based post-hoc explainers.}
\emph{Occlusion} is gradient-free and runs on the real hard-gated model: each (patch, channel) block is replaced by the per-channel reference, the squared forecast shift is recorded as that block's importance, and the score is broadcast back to point resolution; the block length equals the model's own patch length. \emph{Random} assigns uniform-random importance and serves as the floor of the fidelity metrics.

\paragraph{Test-time mask optimizers.}
Dynamask \citep{dynamask2021} and ExtremalMask \citep{extremalmask2023} learn a per-instance mask by optimizing directly against the model's own output, so they are near-oracles on fidelity metrics by construction. Both are published for classification; we re-target them to forecasting by replacing the class-probability-preservation term of their objectives with forecast preservation, the MSE over the full $H\times C$ forecast, with gradients through the same soft relaxation as above.
\begin{itemize}
\item \emph{Dynamask} optimizes per-instance mask logits with Adam under its published extremal keep-ratio constraint, the sorted-mask regularizer, which lets us pin the mask to the same budget as the native mask, together with its temporal total-variation term; the budget pressure is annealed from soft to binding over the run. Its published perturbation operator fades masked values to a per-channel moving average; when scoring, we instead hand it the evaluation protocol's own replacement reference, so it optimizes against exactly the operator the metric perturbs with, the most favorable setting.
\item \emph{ExtremalMask} keeps the same budget machinery but learns the replacement values jointly with the mask: a small convolutional network proposes replacements around the neutral anchor and is $\ell_2$-regularized toward it, so the mask cannot hide information in the perturbation.
\end{itemize}
Both run 500 optimization steps per test batch with the same or an oracle sparsity budget, and \S\ref{sec:rq3} verifies their scores are converged rather than under-optimized.

\section{Implementation}
The gate network is 1 to 2 Transformer layers with $d{=}64$ and patch length 12 on real data, using channel-independent gate attention for large $C$. The base readout is a gated linear map or MLP chosen on validation. Budget sparsity is $\rho{=}0.2$ for explanation studies, and $\beta{=}0.02$ KL elsewhere. Optimization uses Adam, with the gate temperature $T$ of Eq.~\eqref{eq:gates} annealed over training. Headline results use the protocol seed 2024, and for robustness every headline cell is additionally run with 3 seeds, with a seed standard deviation of 0.005 MSE or less on 17 of 20 cells and a maximum of 0.010. All real-data training fits on a single RTX 4090 GPU. We further provide all the hyperparameter settings in Table~\ref{tab:config}.

\section{Factor decomposition}
\label{app:factors}

For controlled studies, the gate head emits $K{+}1$ maps. Map $0$ is the base mask of the main text, and maps $1$ through $K$ feed a masked-replacement pooling into factor scores $s_k$ and per-channel horizon loadings, $\hat D^{\text{struct}}_{h,c} = \sum_k A_{h,k,c}\, s_k$, giving a horizon-resolved account of which support drives which part of the future. On the synthetic benchmarks with planted multi-driver structure, this decomposition is recoverable. On the real benchmarks, the factor path collapses and contributes exactly zero, since zeroing $A$ changes neither accuracy nor any faithfulness score (Table~\ref{tab:ablations}), so the real-data model descopes it and the deployed explanation.

\section{Global gate-usage statistics}
\label{app:gatefreq}

Figure~\ref{fig:gatefreq} aggregates the mask over 512 ETTh1 test windows, giving the gate-open frequency per (lag-patch, channel). The daily structure, at lag $-96$ to $-84$ and the most recent patches, and the exclusion of near-constant channels emerge without supervision, which is the population-level counterpart of the single-window case study in Figure~\ref{fig:gatemap}.

\begin{figure}[t]
\centering
\includegraphics[width=\columnwidth]{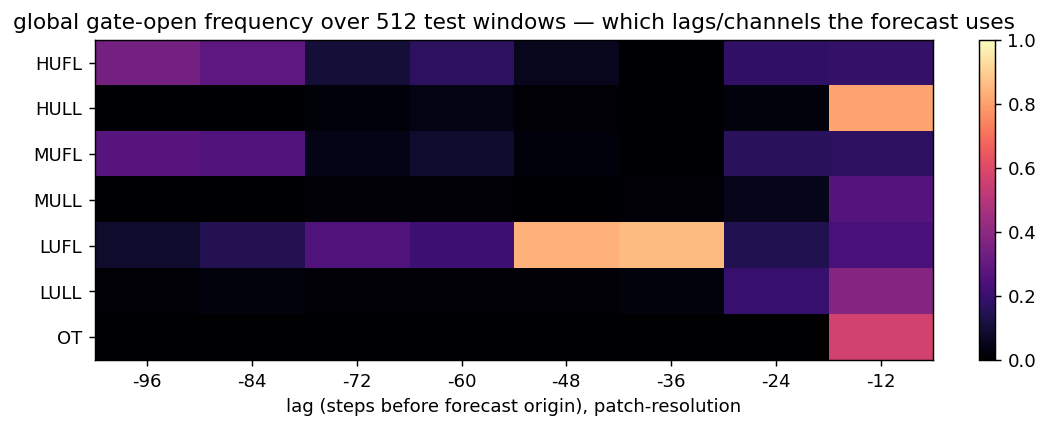}
\caption{Gate-open frequency per (lag-patch, channel) over 512 ETTh1 test windows ($\rho{=}0.2$, $H{=}96$).}
\label{fig:gatefreq}
\end{figure}

\begin{table*}[t]
\centering
\caption{Per-horizon forecasting results under the TQNet protocol (look-back 96, standardized metrics), extending Table~\ref{tab:accuracy}: each Avg row is the mean of the four horizons above it and reproduces the corresponding row of Table~\ref{tab:accuracy}. Bold marks the best overall per column pair, and underline marks the best interpretable model. $^\S$ProtoTS Weather cells exceeded the 10-hour-per-cell compute budget and are omitted, so its Average row is also omitted.}
\label{tab:perhorizon}
\setlength{\tabcolsep}{2.6pt}
\resizebox{\textwidth}{!}{
\begin{tabular}{ll cc cc cc cc cc cc cc cc cc cc}
\toprule
& & \multicolumn{10}{c}{\textit{Interpretable-by-design}} & \multicolumn{10}{c}{\textit{Black-box}} \\
\cmidrule(lr){3-12}\cmidrule(lr){13-22}
& & \multicolumn{2}{c}{\textbf{\ours}} & \multicolumn{2}{c}{PatchDecomp} & \multicolumn{2}{c}{TFT} & \multicolumn{2}{c}{DLinear} & \multicolumn{2}{c}{ProtoTS$^\S$} & \multicolumn{2}{c}{TQNet} & \multicolumn{2}{c}{CycleNet} & \multicolumn{2}{c}{iTransformer} & \multicolumn{2}{c}{PatchTST} & \multicolumn{2}{c}{TimesNet} \\
\cmidrule(lr){3-4}\cmidrule(lr){5-6}\cmidrule(lr){7-8}\cmidrule(lr){9-10}\cmidrule(lr){11-12}\cmidrule(lr){13-14}\cmidrule(lr){15-16}\cmidrule(lr){17-18}\cmidrule(lr){19-20}\cmidrule(lr){21-22}
Dataset & $H$ & MSE & MAE & MSE & MAE & MSE & MAE & MSE & MAE & MSE & MAE & MSE & MAE & MSE & MAE & MSE & MAE & MSE & MAE & MSE & MAE \\
\midrule
\multirow{5}{*}{ETTh1} & 96 & 0.382 & \underline{\textbf{0.393}} & \underline{0.373} & 0.397 & 0.506 & 0.482 & 0.386 & 0.400 & 0.414 & 0.419 & \textbf{0.371} & \textbf{0.393} & 0.375 & 0.395 & 0.386 & 0.405 & 0.414 & 0.419 & 0.384 & 0.402 \\
 & 192 & 0.426 & \underline{\textbf{0.418}} & \underline{\textbf{0.424}} & 0.423 & 0.569 & 0.507 & 0.437 & 0.432 & 0.555 & 0.490 & 0.428 & 0.426 & 0.436 & 0.428 & 0.441 & 0.436 & 0.460 & 0.445 & 0.436 & 0.429 \\
 & 336 & \underline{\textbf{0.466}} & \underline{\textbf{0.438}} & 0.494 & 0.464 & 0.613 & 0.523 & 0.481 & 0.459 & 0.722 & 0.579 & 0.476 & 0.446 & 0.496 & 0.455 & 0.487 & 0.458 & 0.501 & 0.466 & 0.491 & 0.469 \\
 & 720 & \underline{\textbf{0.477}} & \underline{\textbf{0.468}} & 0.532 & 0.501 & 0.636 & 0.545 & 0.519 & 0.516 & 0.713 & 0.596 & 0.487 & 0.470 & 0.520 & 0.484 & 0.503 & 0.491 & 0.500 & 0.488 & 0.521 & 0.500 \\
\cmidrule(lr){2-22}
 & Avg & \underline{\textbf{0.438}} & \underline{\textbf{0.429}} & 0.456 & 0.446 & 0.581 & 0.514 & 0.456 & 0.452 & 0.601 & 0.521 & 0.441 & 0.434 & 0.457 & 0.441 & 0.454 & 0.448 & 0.469 & 0.455 & 0.458 & 0.450 \\
\midrule
\multirow{5}{*}{ETTh2} & 96 & \underline{\textbf{0.291}} & \underline{\textbf{0.340}} & 0.295 & 0.347 & 0.386 & 0.402 & 0.333 & 0.387 & 0.306 & 0.347 & 0.295 & 0.343 & 0.298 & 0.344 & 0.297 & 0.349 & 0.302 & 0.348 & 0.340 & 0.374 \\
 & 192 & \underline{0.372} & \underline{\textbf{0.391}} & 0.397 & 0.413 & 0.482 & 0.454 & 0.477 & 0.476 & 0.397 & 0.408 & \textbf{0.367} & 0.393 & 0.372 & 0.396 & 0.380 & 0.400 & 0.388 & 0.400 & 0.402 & 0.414 \\
 & 336 & \underline{0.431} & \underline{0.432} & 0.439 & 0.448 & 0.469 & 0.459 & 0.594 & 0.541 & 0.455 & 0.454 & \textbf{0.417} & \textbf{0.427} & 0.431 & 0.439 & 0.428 & 0.432 & 0.426 & 0.433 & 0.452 & 0.452 \\
 & 720 & \underline{0.430} & \underline{\textbf{0.443}} & 0.463 & 0.473 & 0.454 & 0.462 & 0.831 & 0.657 & 0.451 & 0.461 & 0.433 & 0.446 & 0.450 & 0.458 & \textbf{0.427} & 0.445 & 0.431 & 0.446 & 0.462 & 0.468 \\
\cmidrule(lr){2-22}
 & Avg & \underline{0.381} & \underline{\textbf{0.402}} & 0.399 & 0.420 & 0.448 & 0.444 & 0.559 & 0.515 & 0.402 & 0.418 & \textbf{0.378} & \textbf{0.402} & 0.388 & 0.409 & 0.383 & 0.407 & 0.387 & 0.407 & 0.414 & 0.427 \\
\midrule
\multirow{5}{*}{ETTm1} & 96 & 0.326 & 0.364 & \underline{0.322} & \underline{0.361} & 0.421 & 0.436 & 0.345 & 0.372 & 0.336 & \underline{0.361} & \textbf{0.311} & \textbf{0.353} & 0.319 & 0.360 & 0.334 & 0.368 & 0.329 & 0.367 & 0.338 & 0.375 \\
 & 192 & \underline{0.364} & 0.388 & 0.369 & 0.388 & 0.480 & 0.465 & 0.380 & 0.389 & 0.383 & \underline{0.387} & \textbf{0.356} & \textbf{0.378} & 0.360 & 0.381 & 0.377 & 0.391 & 0.367 & 0.385 & 0.374 & 0.387 \\
 & 336 & 0.395 & 0.411 & \underline{0.390} & \underline{0.407} & 0.482 & 0.464 & 0.413 & 0.413 & 0.724 & 0.562 & 0.390 & \textbf{0.401} & \textbf{0.389} & 0.403 & 0.426 & 0.420 & 0.399 & 0.410 & 0.410 & 0.411 \\
 & 720 & \underline{0.448} & 0.447 & 0.454 & \underline{0.440} & 0.566 & 0.512 & 0.474 & 0.453 & 0.748 & 0.580 & 0.452 & 0.440 & \textbf{0.447} & 0.441 & 0.491 & 0.459 & 0.454 & \textbf{0.439} & 0.478 & 0.450 \\
\cmidrule(lr){2-22}
 & Avg & \underline{0.383} & 0.403 & 0.384 & \underline{0.399} & 0.487 & 0.469 & 0.403 & 0.407 & 0.548 & 0.473 & \textbf{0.377} & \textbf{0.393} & 0.379 & 0.396 & 0.407 & 0.410 & 0.387 & 0.400 & 0.400 & 0.406 \\
\midrule
\multirow{5}{*}{ETTm2} & 96 & \underline{0.167} & \underline{0.248} & 0.179 & 0.266 & 0.197 & 0.276 & 0.193 & 0.292 & 0.181 & 0.256 & 0.173 & 0.256 & \textbf{0.163} & \textbf{0.246} & 0.180 & 0.264 & 0.175 & 0.259 & 0.187 & 0.267 \\
 & 192 & \underline{0.232} & \underline{0.291} & 0.250 & 0.314 & 0.273 & 0.325 & 0.284 & 0.362 & 0.254 & 0.305 & 0.238 & 0.298 & \textbf{0.229} & \textbf{0.290} & 0.250 & 0.309 & 0.241 & 0.302 & 0.249 & 0.309 \\
 & 336 & \underline{0.291} & \underline{0.331} & 0.331 & 0.373 & 0.326 & 0.356 & 0.369 & 0.427 & 0.321 & 0.349 & 0.301 & 0.340 & \textbf{0.284} & \textbf{0.327} & 0.311 & 0.348 & 0.305 & 0.343 & 0.321 & 0.351 \\
 & 720 & \underline{0.398} & \underline{0.394} & 0.427 & 0.423 & 0.447 & 0.424 & 0.554 & 0.522 & 0.854 & 0.624 & 0.397 & 0.396 & \textbf{0.389} & \textbf{0.391} & 0.412 & 0.407 & 0.402 & 0.400 & 0.408 & 0.403 \\
\cmidrule(lr){2-22}
 & Avg & \underline{0.272} & \underline{0.316} & 0.297 & 0.344 & 0.311 & 0.345 & 0.350 & 0.401 & 0.403 & 0.384 & 0.277 & 0.323 & \textbf{0.266} & \textbf{0.314} & 0.288 & 0.332 & 0.281 & 0.326 & 0.291 & 0.333 \\
\midrule
\multirow{5}{*}{Weather} & 96 & \underline{0.159} & \underline{0.204} & 0.170 & 0.212 & 0.181 & 0.230 & 0.196 & 0.255 & --- & --- & \textbf{0.157} & \textbf{0.200} & 0.158 & 0.203 & 0.174 & 0.214 & 0.177 & 0.210 & 0.172 & 0.220 \\
 & 192 & \underline{0.208} & \underline{0.247} & 0.217 & 0.253 & 0.242 & 0.280 & 0.237 & 0.296 & --- & --- & \textbf{0.206} & \textbf{0.245} & 0.207 & 0.247 & 0.221 & 0.254 & 0.225 & 0.250 & 0.219 & 0.261 \\
 & 336 & \underline{0.264} & \underline{0.289} & 0.273 & 0.293 & 0.293 & 0.315 & 0.283 & 0.335 & --- & --- & \textbf{0.262} & \textbf{0.287} & \textbf{0.262} & 0.289 & 0.278 & 0.296 & 0.278 & 0.290 & 0.280 & 0.306 \\
 & 720 & \underline{\textbf{0.344}} & \underline{0.344} & 0.351 & 0.345 & 0.382 & 0.367 & 0.345 & 0.381 & --- & --- & \textbf{0.344} & 0.342 & \textbf{0.344} & 0.344 & 0.358 & 0.349 & 0.354 & \textbf{0.340} & 0.365 & 0.359 \\
\cmidrule(lr){2-22}
 & Avg & \underline{0.244} & \underline{0.271} & 0.253 & 0.276 & 0.275 & 0.298 & 0.265 & 0.317 & --- & --- & \textbf{0.242} & \textbf{0.269} & 0.243 & 0.271 & 0.258 & 0.278 & 0.259 & 0.273 & 0.259 & 0.287 \\
\midrule
\multicolumn{2}{c}{Overall} & \underline{0.344} & \underline{\textbf{0.364}} & 0.358 & 0.377 & 0.420 & 0.414 & 0.407 & 0.418 & --- & --- & \textbf{0.343} & \textbf{0.364} & 0.346 & 0.366 & 0.358 & 0.375 & 0.356 & 0.372 & 0.364 & 0.380 \\
\bottomrule
\end{tabular}}
\end{table*}

\begin{table*}[h]
\centering
\caption{Experiments on large-channel datasets: \ours{} vs.\ TQNet.  \ours{} MSE is the mean value of 3 seeds with std $\le0.008$. TQNet is the published value. $\Delta$ is the relative gap. A positive value means \ours{} has the higher (worse) MSE.}
\label{tab:bigc}
\small
\setlength{\tabcolsep}{3.5pt}
\begin{tabular}{l ccc ccc ccc}
\toprule
& \multicolumn{3}{c}{Electricity ($C{=}321$)} & \multicolumn{3}{c}{Solar ($C{=}137$)} & \multicolumn{3}{c}{Traffic ($C{=}862$)} \\
\cmidrule(lr){2-4}\cmidrule(lr){5-7}\cmidrule(lr){8-10}
$H$ & \ours{} & TQNet & $\Delta$ & \ours{} & TQNet & $\Delta$ & \ours{} & TQNet & $\Delta$ \\
\midrule
96  & 0.136 & 0.134 & $+$1.5\% & 0.186 & 0.173 & $+$7.5\% & 0.460 & 0.413 & $+$11.4\% \\
192 & 0.153 & 0.154 & $-$0.6\% & 0.200 & 0.199 & $+$0.5\% & 0.467 & 0.432 & $+$8.1\% \\
336 & 0.171 & 0.169 & $+$1.2\% & 0.207 & 0.211 & $-$1.9\% & 0.487 & 0.450 & $+$8.2\% \\
720 & 0.213 & 0.201 & $+$6.0\% & 0.221 & 0.209 & $+$5.7\% & 0.527 & 0.486 & $+$8.4\% \\
\midrule
Avg & 0.168 & 0.164 & $+$2.4\% & 0.204 & 0.198 & $+$3.0\% & 0.485 & 0.445 & $+$9.0\% \\
\bottomrule
\end{tabular}
\end{table*}

\section{Per-horizon accuracy results}
\label{app:perhorizon}

Table~\ref{tab:perhorizon} expands Table~\ref{tab:accuracy} of the main text to full per-horizon MSE and MAE on the five anchor datasets, with the same interpretable/black-box grouping. \ours{} cells are means over 3 seeds, from 2024 to 2026, and the seed standard deviation is not greater than 0.005 MSE on 17 of 20 cells, with a maximum of 0.010 at ETTh2, $H{=}720$. PatchDecomp, TFT, and ProtoTS cells are our runs at the protocol seed and the DLinear and black-box columns are from the TQNet paper's consolidated table.

\section{Breadth on large-channel datasets}
\label{app:bigc}
We also evaluate \ours{} on three large channel datasets: Electricity, Solar, and Traffic. These experiments follow the same protocol as the main results. Table~\ref{tab:bigc} reports results over 3 seeds and 4 horizons, with seed standard deviation no larger than $0.008$, against TQNet's published cells. On Electricity and Solar, \ours{} is essentially at parity: the average gap is $+2.4\%$ and $+3.0\%$, the per-horizon gap ranges from $-0.6\%$ to $+6.0\%$ and from $-1.9\%$ to $+7.5\%$, and \ours{} wins outright at Electricity $H{=}192$ and Solar $H{=}336$. The one substantive gap is Traffic ($C{=}862$), at $+8.1$ to $+11.4\%$ per horizon and $+9.0\%$ on average, shrinking from $+11.4\%$ at $H{=}96$ to about $+8\%$ at longer horizons. A capacity study shows that this gap is not caused by the gate network size, since doubling its width or depth only changes results in the third decimal place. These results provide breadth and scalability evidence. In particular, Traffic trains in only $2.6$GB with channel-independent gate attention, showing that the explanation machinery can scale to hundreds of channels.

\begin{figure*}[t]
\centering
\includegraphics[width=\textwidth]{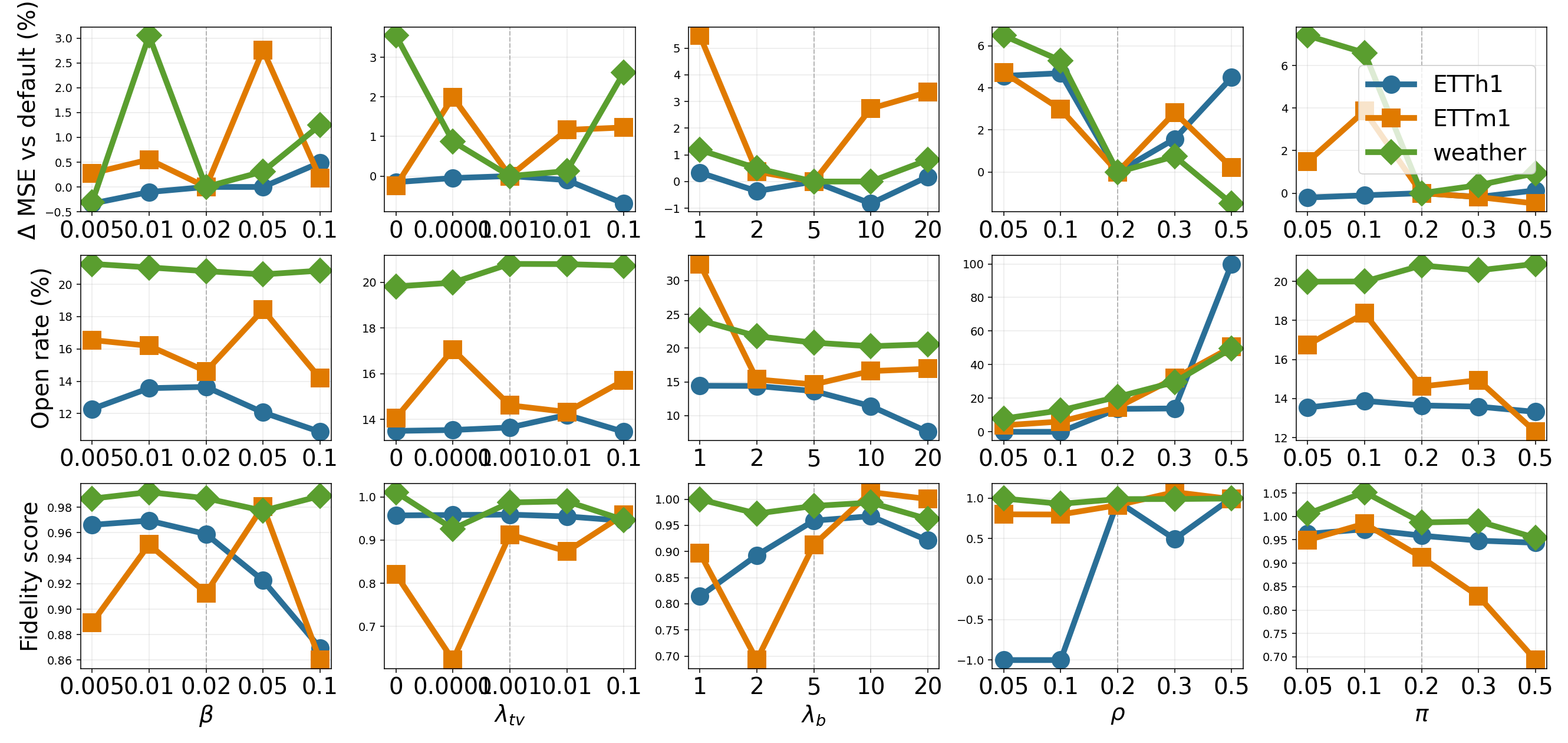}
\caption{One-at-a-time hyperparameter sensitivity on ETTh1, ETTm1 and Weather ($H{=}96$, seed 2024). Columns are the five regularization hyperparameters; the dashed line marks the shared default. Rows: change in test MSE relative to the default (\%), realized base-mask open rate (\%), and matched-budget fidelity score. Accuracy (top) is flat except under $\rho$, the budget dial; the hyperparameters instead move the explanation's sparsity (middle) and faithfulness (bottom), both stable across a wide interior.}
\label{fig:hp}
\end{figure*}

\section{Hyperparameter sensitivity}
\label{app:hp}

The deployed model has five regularization hyperparameters: the information-bottleneck weight $\beta$, the gate temporal-variation weight $\lambda_{tv}$, the budget-penalty weight $\lambda_b$, the open-rate target $\rho$, and the per-gate prior $\pi$. Figure~\ref{fig:hp} sweeps each one at a time about the shared default ($\beta{=}0.02$, $\lambda_{tv}{=}10^{-3}$, $\lambda_b{=}5$, $\rho{=}0.2$, $\pi{=}0.2$) on ETTh1, ETTm1 and Weather at $H{=}96$, reporting the change in test MSE relative to the default, the realized base-mask open rate, and the matched-budget fidelity score.

Two patterns stand out. First, \textbf{accuracy is essentially insensitive to the regularizers}: across every sweep of $\beta$, $\lambda_{tv}$, $\lambda_b$ and $\pi$ the MSE stays within about $3\%$ of the default on all three datasets (top row), so none of these weights needs per-dataset tuning. The only hyperparameter that moves accuracy materially is $\rho$, and it does so because it \emph{is} the budget---the Pareto dial of Section~\ref{sec:rq4}, not a nuisance parameter. Second, \textbf{the regularizers act on the explanation rather than the forecast}: $\rho$ sets the open rate almost linearly (middle row), while $\beta$, $\lambda_b$ and $\pi$ shift it more gently, and the fidelity score (bottom row) stays high, typically above $0.9$. It falls only at the extremes, where a too-large $\beta$ or $\pi$ over-closes the mask on the ETT datasets or a too-small $\rho$ collapses it fully closed (the $\rho\leq0.1$ ETTh1 failure noted in Section~\ref{sec:rq4}), plus a few isolated dips on ETTm1, the noisiest of the three. The default therefore sits in a broad interior region where accuracy, sparsity and faithfulness are simultaneously stable, which is why a single configuration transfers across datasets in Tables~\ref{tab:accuracy} and~\ref{tab:fidelity}.

\begin{table*}[t]
\centering
\caption{ Ground-truth support recovery on the framed benchmark (AUROC, AUR, and AUP). All cells are mean $\pm$ seed standard deviation over 3 seeds.  Dynamask and ExtremalMask receive an oracle sparsity budget. The best performance is marked in bold. $^\dagger$TIMING runs its release defaults, whose segments saturate a 96-step window and score near chance.}
\label{tab:gtmetrics}
\setlength{\tabcolsep}{2pt}
\resizebox{\textwidth}{!}{
\begin{tabular}{l ccc cccccc c}
\toprule
Mode & \textbf{\ours{}} & PatchDecomp& TFT & Occlusion & Saliency & IG & TIMING$^\dagger$ & Dynamask & ExtremalMask & Random \\
\midrule
\multicolumn{11}{c}{\textbf{AUROC}}\\
\texttt{pulse} & \textbf{0.912$\pm$.003} & 0.900$\pm$.002 & 0.549$\pm$.059 & 0.891$\pm$.002 & 0.846$\pm$.006 & 0.839$\pm$.003 & 0.524$\pm$.000 & 0.814$\pm$.001 & 0.868$\pm$.008 & 0.492$\pm$.000 \\
\texttt{decoy} & \textbf{0.878$\pm$.005} & 0.815$\pm$.008 & 0.493$\pm$.027 & 0.827$\pm$.005 & 0.769$\pm$.008 & 0.775$\pm$.008 & 0.530$\pm$.000 & 0.760$\pm$.015 & 0.796$\pm$.003 & 0.502$\pm$.000 \\
\texttt{trend} & \textbf{0.689$\pm$.001} & 0.554$\pm$.007 & 0.515$\pm$.011 & 0.650$\pm$.004 & 0.602$\pm$.007 & 0.637$\pm$.005 & 0.424$\pm$.000 & 0.559$\pm$.004 & 0.566$\pm$.007 & 0.501$\pm$.000 \\
\midrule
\multicolumn{11}{c}{\textbf{AUR}}\\
\texttt{pulse} & \textbf{0.791$\pm$.003} & 0.618$\pm$.007 & 0.180$\pm$.035 & 0.568$\pm$.002 & 0.272$\pm$.022 & 0.230$\pm$.011 & 0.022$\pm$.000 & 0.366$\pm$.007 & 0.343$\pm$.005 & 0.492$\pm$.000 \\
\texttt{decoy} & \textbf{0.755$\pm$.029} & 0.565$\pm$.006 & 0.190$\pm$.068 & 0.416$\pm$.005 & 0.153$\pm$.037 & 0.152$\pm$.020 & 0.021$\pm$.001 & 0.267$\pm$.002 & 0.250$\pm$.008 & 0.502$\pm$.000 \\
\texttt{trend} & \textbf{0.375$\pm$.015} & 0.261$\pm$.007 & 0.150$\pm$.001 & 0.181$\pm$.005 & 0.068$\pm$.001 & 0.068$\pm$.001 & 0.012$\pm$.000 & 0.169$\pm$.001 & 0.159$\pm$.001 & 0.501$\pm$.000 \\
\midrule
\multicolumn{11}{c}{\textbf{AUP}}\\
\texttt{pulse} & 0.471$\pm$.070 & \textbf{0.755$\pm$.014} & 0.193$\pm$.113 & 0.711$\pm$.004 & 0.409$\pm$.068 & 0.611$\pm$.062 & 0.061$\pm$.004 & 0.310$\pm$.010 & 0.294$\pm$.007 & 0.031$\pm$.000 \\
\texttt{decoy} & 0.193$\pm$.050 & \textbf{0.638$\pm$.055} & 0.068$\pm$.037 & 0.375$\pm$.009 & 0.127$\pm$.046 & 0.217$\pm$.058 & 0.065$\pm$.008 & 0.204$\pm$.003 & 0.194$\pm$.012 & 0.030$\pm$.000 \\
\texttt{trend} & 0.343$\pm$.003 & 0.201$\pm$.011 & 0.192$\pm$.007 & \textbf{0.565$\pm$.008} & 0.391$\pm$.007 & 0.520$\pm$.005 & 0.096$\pm$.006 & 0.156$\pm$.001 & 0.152$\pm$.002 & 0.133$\pm$.000 \\
\bottomrule
\end{tabular}}
\end{table*}

\section{Synthetic ground-truth support recovery}
\label{app:synthetic}

In this section, we report the perturbation-free faithfulness evidence that corroborates the real-data result of the matched-budget evaluation (Table~\ref{tab:fidelity}). Real-data fidelity necessarily perturbs inputs and therefore inherits protocol judgment calls; here the generator itself plants the causal support, so explanations are scored against known ground truth with no perturbation operator at all.

\paragraph{Benchmark construction.}
The benchmark instantiates the structural assumptions of Problem~\ref{prob:sif} directly, so that every disclosed component of the model has a legitimate, measurable job. Each sample is a look-back window $X\in\mathbb{R}^{L\times C}$ with a multivariate target $Y\in\mathbb{R}^{H\times C}$, here $L{=}96$, $H{=}24$, $C{=}4$. For each window we draw a per-channel level $\mu_c\sim U(-3,3)$ and scale $s_c\sim U(0.5,1.5)$, a shared cycle phase $\tau\sim U\{0,\dots,W{-}1\}$ with $W{=}24$, and, independently per channel, a driver indicator $z_c\sim\mathrm{Bernoulli}(\tfrac12)$. The window and its future are generated as
\begin{equation}
    \begin{aligned}
    X[t,c] & = \mu_c + s_c\Big(\phi_c\big((t{+}\tau)\bmod W\big) \beta_c(t) \\ 
        & + z_c\, r_c(t) + \varepsilon^{X}_{t,c}\Big),
    \end{aligned}
    \label{eq:synthx}
\end{equation}
\begin{equation}
    \begin{aligned}
    Y[t,c] & = \mu_c + s_c\Big(\phi_c\big((t{+}L{+}\tau)\bmod W\big) \\
            & + z_c\, g_c(t) + \varepsilon^{Y}_{t,c}\Big),
    \end{aligned}
    \label{eq:synthy}
\end{equation}
\begin{align}
M[t,c] &= z_c\cdot \mathds{1}\!\left[t\in\mathrm{supp}(r_c)\right], \label{eq:synthm}
\end{align}
where every injected shape is de-meaned over the window, $\sum_{t}\phi_c(\cdot)=\sum_{t}\beta_c(t)=\sum_{t}r_c(t)=0$, and $\varepsilon^{X},\varepsilon^{Y}$ are i.i.d.\ Gaussian noise. Each term plays a role that mirrors one disclosed component of the model:

\begin{itemize}
\item \emph{Frame} $(\mu_c,s_c)$: the per-channel level and scale, exactly the information the normalization frame ($\mu,\sigma$ of Eq.~\eqref{eq:decomp}) discloses. Because every injection is de-meaned, the window mean equals $\mu_c$ regardless of the driver, so the frame carries \emph{no} driver information and frame and mask are orthogonal by construction.
\item \emph{Cycle} $\phi_c$: a fixed per-channel periodic shape of period $W{=}24$, shared across the whole dataset, entering window and future at the same phase $\tau$; this is the ground-truth counterpart of the learned profile $Q$. It mimics the periodicity of the real hourly benchmarks, where the look-back $L{=}96$ spans four days and the horizon $H{=}24$ spans one, and $\tau$ is handed to the model exactly as in the real protocol.
\item \emph{Background} $\beta_c$: a per-window mixture of random-period sinusoids, present in $X$ but absent from $Y$ (note $\beta_c$ appears in Eq.~\eqref{eq:synthx} but not Eq.~\eqref{eq:synthy}), hence causally irrelevant by construction. A faithful explanation must leave it out.
\item \emph{Drivers} $r_c$ (present iff $z_c{=}1$): the sparse in-channel causal event, and its type defines the mode. \texttt{pulse} plants a 6-step bump of amplitude $U(2,5)$ with random sign; \texttt{trend} plants a 24-step ramp with slope $U(0.05,0.25)$ and random sign, whose realized rise the future response $g_c$ carries and decays over the horizon; \texttt{decoy} plants the \texttt{pulse}. It then adds an attenuated, noise-corrupted copy elsewhere in the same channel. The copy is predictive of $Y$, but strictly inferior to the clean pulse. It is \emph{excluded} from the ground truth $M$.
\end{itemize}

The forecast term $g_c$ is the driver's effect on the future (for \texttt{trend}, the ramp's realized rise decaying over the horizon; for \texttt{pulse}/\texttt{decoy}, its analogous response). A channel with $z_c{=}0$ has a pure frame-and-cycle future, so a closed mask there asserts precisely that the channel followed the disclosed pattern, and by Eqs.~\eqref{eq:synthx}--\eqref{eq:synthy} that assertion is true. The ground-truth mask $M$ of Eq.~\eqref{eq:synthm} marks exactly the injected driver positions. 
Splits are 4000/1000/1000 windows; explanations are scored on 256 test windows with AUROC, AUP, and AUR against $M$.

\paragraph{Model and baselines.}
The model under explanation is the deployed architecture of the main text, with the frame, the cycle profile at $W{=}24$, and the base mask as the explanation, trained for 150 epochs at patch length 6, $\beta{=}0.1$, $\pi{=}0.05$ (three seeds; six where noted). 
The post-hoc baselines explain this same model. They are saliency, IG, TIMING, patch occlusion, and a random floor. Two are test-time mask optimizers, Dynamask and ExtremalMask. Both run at the oracle budget, namely the ground-truth mask's own density.
TFT and PatchDecomp are native-versus-native comparisons: each is trained fresh on the benchmark in its strongest configuration (multivariate output with instance normalization), and its own explanation is read off exactly as its paper prescribes. Their architectures take no phase input, which we disclose; four full cycles are visible in every window, so the phase is inferable from the data they see.

\paragraph{Analysis.}
\ours{} ranks first in every AUROC cell in Table~\ref{tab:gtmetrics}. It also performs especially well on recall. Its AUR is between $0.76$ and $0.79$ on \texttt{pulse} and \texttt{decoy}, while every baseline stays at or below $0.62$. This shows that the budgeted mask recovers most of the planted support, whereas point scoring methods only identify fragments. The \texttt{decoy} setting gives a clearer causal test of the information bottleneck. PatchDecomp is close on the clean \texttt{pulse} setting, with $0.900$ versus $0.912$, but drops to $0.815$ on \texttt{decoy} because its unconstrained forecaster uses the decoy. In contrast, our mask rejects the decoy and reaches $0.878$. The effect is controlled by $\beta$. With $\beta{=}0.02$, \texttt{decoy} AUROC is only $0.09$ to $0.36$, which means the model actively selects the decoy. With $\beta{=}0.1$, it rises to $0.85$ to $0.88$. Spurious exclusion is therefore controlled by the objective, not by accident. The cycle component also matters. Removing it drops \texttt{pulse} AUROC from $0.876$ to $0.631$ and increases the mask's false positive mass from $0.68$ to $0.955$, because the budget is wasted on relearning periodic structure. TFT, by contrast, stays near chance in all modes, consistent with the attention is not explanation phenomenon \citep{attention2019}.

The results also show two costs. First, \ours{} trades precision for recall. It does not lead on AUP in Table~\ref{tab:gtmetrics}, because the sufficiency objective tends to select extra tokens around each event instead of isolating only the exact causal tokens. The \texttt{trend} setting is difficult for every method. Our native explanation reaches $0.689$, while the best post hoc method reaches $0.650$. This happens because the target depends on the ramp's realized rise, so the truly sufficient evidence is smaller than the full annotation. As a result, all methods lose recall under this label. Second, the sufficiency constraint introduces an accuracy cost, and this benchmark makes that cost look large. PatchDecomp can read the whole window and reaches roughly half our MSE, for example $0.16$ versus $0.30$ on \texttt{pulse}. Under the real data protocol, however, the same constraint costs only about $3\%$. The synthetic setting is a worst case for a budgeted reader, since all useful signal is concentrated in the few tokens the model must find.

\begin{figure}[t]
\centering
\includegraphics[width=0.8\columnwidth]{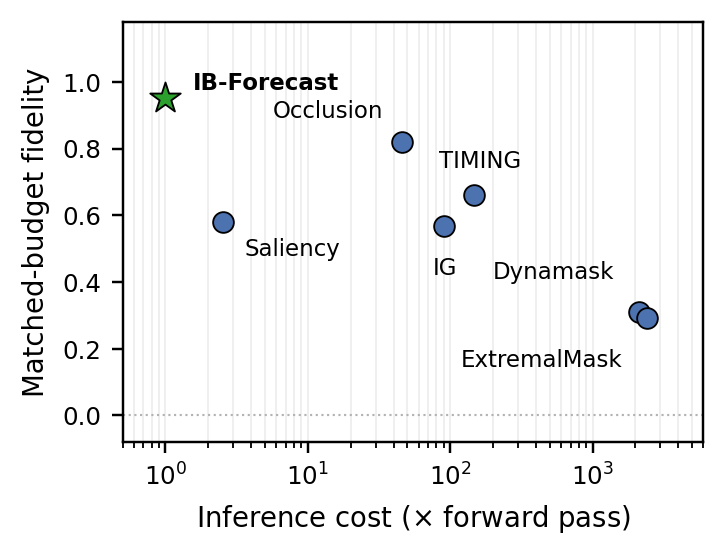}
\caption{Fidelity against inference cost per explanation on the ETTh1 dataset at $H{=}96$, with cost on a logarithmic axis measured in forward-equivalent passes. }
\label{fig:cost}
\end{figure}

\section{Efficiency analysis}
\label{sec:efficiency}
We present the efficiency analysis in terms of the measured inference cost per explanation, timed on a GPU and expressed in forward-equivalent passes. As shown in Figure~\ref{fig:cost}, among the post-hoc methods, saliency costs about $2.5$, occlusion about $46$, IG about $91$, and TIMING about $148$ forward-equivalent passes, and each trails the native mask by $0.13$ to $0.37$ fidelity. TFT and PatchDecomp also explain at near-zero cost but score negative fidelity.
Figure~\ref{fig:cost} compares fidelity against inference cost per explanation. The native explanation is emitted by the forward pass and adds no computation. Among post-hoc methods, occlusion, the strongest on fidelity, requires 56 to 168 forward passes per window; TIMING requires 50 forward and backward passes and trails the native mask by 0.08 to 0.37 fidelity, with IG at 32 such passes and saliency at a single backward pass. The test-time mask optimizers spend 500 optimization steps per batch, each a forward and backward pass, and still trail by more than 0.5. TFT and PatchDecomp also explain at zero cost but score negative fidelity (Table~\ref{tab:fidelity}), so low cost alone does not confer faithfulness. 
Finally, explanation scales to Traffic ($C{=}862$) with no memory overhead beyond the forecaster itself, at 2.6\,GB total, via channel-independent gate attention.

\end{document}